\pdfoutput=1
\documentclass{article}

\usepackage[preprint]{neurips_2026}

\usepackage[inline]{enumitem}

\usepackage{tikz}
\usetikzlibrary{arrows.meta, positioning, calc}

\usepackage{amsthm}
\usepackage{thmtools}
\newtheorem{proposition}{Proposition}
\newtheorem{lemma}{Lemma}
\theoremstyle{definition}
\newtheorem{definition}{Definition}

\usepackage{setspace} 
\usepackage{graphicx}
\usepackage{subcaption}
\usepackage{booktabs}
\usepackage{multirow}

\usepackage[utf8]{inputenc} 
\usepackage[T1]{fontenc}    
\usepackage{hyperref}       
\hypersetup{colorlinks=true, linkcolor=blue, citecolor=blue, urlcolor=blue}
\usepackage{url}            
\usepackage{amsfonts}       
\usepackage{nicefrac}       
\usepackage{microtype}      
\usepackage{xcolor}         

\usepackage{amsmath,amsfonts,bm}

\newcommand{\Xy}{X_Y}

\newcommand{\xy}{x_Y}

\newcommand{\ey}{e_Y}
\newcommand{\es}{e_S}

\newcommand{\fw}{f_{\mathbf{w}}}

\newcommand{\Ptr}{P_\text{train}}
\newcommand{\Pte}{P_\text{test}}

\newcommand{\Xcal}{\mathcal{X}}
\newcommand{\Scal}{\mathcal{S}}

\newcommand{\Xcaly}{\mathcal{X_Y}}
\newcommand{\Pc}{\mathcal{P}}

\newcommand{\Fcinv}{\mathcal{F}_{\text{inv}}}
\newcommand{\weightset}{\{w_s\}_{s \in \Scal}}
\newcommand{\wstar}{\mathbf{w}^*}
\newcommand{\Btst}{B_{test}}

\newcommand{\classifier}{\mathcal{M}^c}
\newcommand{\lmodel}{\mathcal{M}^l}

\newcommand{\logit}{\operatorname{logit}}

\def\eqref#1{equation~\ref{#1}}

\def\1{\bm{1}}

\DeclareMathAlphabet{\mathsfit}{\encodingdefault}{\sfdefault}{m}{sl}
\SetMathAlphabet{\mathsfit}{bold}{\encodingdefault}{\sfdefault}{bx}{n}

\DeclareMathOperator*{\argmin}{arg\,min}

\title{Counterfactual Test-Time Augmentation for Shortcut Removal in Language Models}

\author{%
  \textbf{Katie Matton} \\
  MIT \\
  \texttt{kmatton@mit.edu} \\
  \and
  \textbf{Rosalind W. Picard} \\
  MIT \\
  \and
  \textbf{John Guttag} \\
  MIT \\
  \and
  \textbf{Maggie Makar} \\
  University of Michigan \\
}

\begin{document}

\maketitle

\begin{abstract}
Large language models (LLMs) can exploit shortcuts, i.e., correlations present in training data that may not hold at test time. This can lead to poor predictions when correlations shift. Most existing solutions for removing shortcuts operate at training time, requiring labeled training data, model weight access, and significant computational resources. Here, we study the problem of \textit{test-time} shortcut removal. We propose a family of counterfactual test-time augmentation (CTTA) methods: given an example, each method uses an auxiliary model to generate counterfactual edits in which the shortcut value is changed, and then takes a weighted average of the LLM's predictions across these edits. We prove that the optimal shortcut-invariant predictor is a member of this CTTA family, and we characterize its weights. The weights depend on the training distribution, and therefore are, in general, not identifiable from the trained model and unlabeled test data. However, we present assumptions under which the weights can be obtained using the token probabilities of the trained LLM, and propose tests to detect violations of these assumptions. We use these tests to develop Select-CTTA, an aggregate method that selects among CTTA methods. Empirically, we find that Select-CTTA outperforms test-time baselines and approaches the performance of models fine-tuned on shortcut-balanced data.
\end{abstract}

\section{Introduction}

Despite impressive performance on a range of tasks, large language models (LLMs) remain brittle. They have been shown to abandon correct answers when users push back \citep{sharma2024towards}, change answers when multiple-choice options are reordered \citep{pezeshkpour2024large}, and rely on demographic stereotypes when making decisions about people \citep{tamkin2023evaluating}. Each of these failures is related to a common issue: LLMs are prone to relying on \textit{shortcuts}, i.e., correlations in training data that may not hold at inference time. As a result, LLMs use heuristics, such as \textit{prefer option A} and \textit{agree with the user}, that reduce training loss but lead to errors during deployment \citep{zheng2024large, sharma2024towards}. Reliance on shortcuts is a major impediment to LLM safety and reliability.

Most existing methods for removing shortcuts in language models are designed to be applied at training time. However, training an LLM requires many resources, including labeled training data, access to model weights, and substantial compute. Users often lack these resources. Hence, here we study a different approach: \textit{removing shortcuts from LLMs at test-time without access to model weights or labeled data.}

We propose a family of test-time shortcut removal methods that we call counterfactual test-time augmentation (CTTA).  Given a test example, each CTTA method uses an auxiliary model to create counterfactual edits in which the shortcut value is changed. We then take a weighted average of the LLM’s predictions across the edits. As we show in this work, if the chosen weights do not depend on the shortcut value observed in the example, the resulting predictor is shortcut-invariant.

Among the set of predictors that are shortcut invariant, we want the one with the lowest error, i.e., the \textit{optimal shortcut-invariant predictor}. We prove that under a common model of shortcut learning, the optimal shortcut-invariant predictor is a member of the CTTA family. The optimal weights depend on the training distribution and therefore, in general, cannot be identified without access to the training data. However, we present a set of assumptions under which the weights can be obtained using the observable token probabilities of the trained LLM. We call the resulting method Model-Weighted CTTA. 

Since the assumptions needed to recover the optimal weights are not always met in practice, we also study simple fixed weighting schemes, such as uniform weighting, that do not require a weight estimation step. We prove fixed weighting schemes have a desirable property: they achieve the same AUROC as the optimal shortcut-invariant predictor, although they are miscalibrated. To select among CTTA methods, we propose Select-CTTA, a procedure that performs diagnostic tests to detect violations of the Model-Weighted CTTA assumptions. When the checks pass, Select-CTTA selects Model-Weighted CTTA; otherwise, it selects uniform weights. 

We evaluate the CTTA methods on two text classification benchmarks and six LLMs, spanning multiple model families and sizes. Select-CTTA consistently achieves the best or near-best performance among CTTA methods and outperforms test-time calibration baselines. In many cases, it approaches the performance of models trained on gold-standard balanced data, while requiring substantially fewer resources. 

Our contributions are:
\begin{enumerate*}[label=\textbf{(\arabic*)}]
\item We propose a family of Counterfactual Test Time Augmentation (CTTA) methods as a solution for test-time shortcut removal in LLMs. 
\item We theoretically analyze the optimality of our proposed methods, providing an understanding of when they match (and fail to match) the performance of the optimal shortcut-invariant predictor.
\item We introduce Select-CTTA, a procedure that adaptively selects between model-based and uniform CTTA weights using label-free diagnostic tests.
\item  We demonstrate that Select-CTTA outperforms test-time baselines and often approaches the performance of fine-tuning on shortcut-balanced data at substantially lower cost. 
\end{enumerate*}

\section{Problem Setting}\label{sec:problem-setup}
We now formalize the problem of test-time shortcut removal in LLMs. We describe our assumed data generating process, define our objective, and describe the resources we assume are accessible at test-time.

\textbf{Data Generating Process.} Following prior work \citep{makar2022causally, feder2023causal}, we assume that the data are generated according to the causal DAG in Figure~\ref{fig:causal-dag}. At test time, we observe texts $X \in \Xcal$, where $\Xcal$ is the space of finite text sequences. In this anti--causal setting, each text is generated from a binary label $Y \in \{0,1\}$ and a categorical shortcut feature $S \in \Scal$. We write the label-relevant content of $X$ as $\Xy \in \Xcaly$ and assume that $Y$ affects $X$ only through $\Xy$. For example, consider biographical texts $X$ with occupation for labels $Y$ and gender as the shortcut $S$. The label $Y$ determines the occupation-related content $\Xy$ (e.g., job responsibilities), while the shortcut is reflected in the names and pronouns used. We assume that the shortcut feature and its possible values $\Scal$ are known. We also assume that $\Xy$ and $S$ can, in theory, be recovered from $X$ through functions $\Xy := \ey(X)$ and $S := \es(X)$. However, we do not assume that these functions are known.

We denote the distribution of data that the LLM was trained on as $\Ptr$ and the distribution of test data as $\Pte$. We assume that $\Pte$ differs from $\Ptr$ only in terms of the association between the shortcut $S$ and label $Y$, with everything else fixed. Formally, we assume that $\Pte \in \Pc$ for $\Pc$ defined as:
\[
\begin{aligned}
\Pc := \bigl\{\,
  &\Ptr(Y)\, P(S \mid Y) \\
  &\Ptr(\Xy \mid Y)\,
  \Ptr(X \mid \Xy, S)
\,\bigr\},
\end{aligned}
\]
where $P(S \mid Y)$ is an arbitrary conditional distribution. While the test conditional is unrestricted, we assume the training distribution satisfies the following positivity conditions: $\Ptr(Y=y)>0$, $\Ptr(S=s\mid Y=y)>0$ and $\Ptr(\Xy=\xy\mid Y=y)>0$ for all $y\in\{0,1\}$, $s\in\Scal$, and $\xy$ in the support of $\Xy$.

\begin{figure}[t]
  \centering
  \begin{tikzpicture}[
      >={Stealth[length=2.0mm]},
      node distance=6mm and 10mm
    ]
    \tikzset{rv/.style={circle, draw, minimum size=9.5mm, inner sep=1pt}}
    \node[rv] (S)  {$S$};
    \node[rv] (Y)  [right=of S] {$Y$};
    \node[rv] (Xy) [below=of Y] {$X_y$};
    \node[rv, fill=gray!30] (X)  [below=of S] {$X$};
    \draw[<->, dashed] (S) -- (Y);
    \draw[->] (S) -- (X);
    \draw[->] (Y) -- (Xy);
    \draw[->] (Xy) -- (X);
  \end{tikzpicture}
  \caption{Causal graph of the data-generating process. The label $Y$ and shortcut $S$ may be correlated (dashed bidirected edge). $Y$ affects the text $X$ only through $\Xy$. Shading indicates observed at test-time.}
  \label{fig:causal-dag}
\end{figure}

\textbf{Objective.} We consider predictors $f:\Xcal\to[0,1]$ that map a text $x$ to the predicted probability that $Y=1$. We define the risk of $f$ under a distribution $P$ as
\[
R_P(f) := \mathbb{E}_{(X,Y)\sim P}\big[\ell(f(X),Y)\big],
\]
where $\ell$ is the logistic loss. We want a predictor whose risk does not depend on the relationship between the shortcut and the label, since this may change arbitrarily at test time. We denote the set of \textit{shortcut-invariant} predictors as: \[
    \Fcinv := \bigl\{
        f : R_P(f) = R_{P'}(f)
        \text{ for all } P, P' \in \Pc
    \bigr\}.
\] Among the predictors in $\Fcinv$, we seek the \textit{optimal shortcut-invariant predictor}, defined as:
\[
    f^* := \argmin_{f \in \Fcinv} R_P(f),
\]
where \(P\) is any distribution in \(\Pc\). Under the assumed data generating process, $f^*$ is equal to the conditional distribution of the label given the label-relevant content of the text \citep{makar2022causally}:  \[f^*(x) = P\big(Y = 1 \mid \Xy = \ey(x)\big),\] where $P$ is any distribution in \(\Pc\).

\textbf{Resources.} We assume access to three resources at test time.

\textit{(1) Trained LLM:} an LLM $\mathcal{M}$ that we can query in two modes. In \textit{classification} mode, $\mathcal{M}^c(x)\in[0,1]$ denotes the LLM's predicted probability that $Y=1$ given $x$. In \textit{likelihood} mode, $\mathcal{M}^l(x)$ denotes the probability assigned by the LLM to the input text sequence. For a text $x=(x_1,\ldots,x_T)$, this is computed autoregressively from the LLM's token probabilities as $\mathcal{M}^l(x) := \prod_{t=1}^{T} \mathcal{M}(x_t \mid x_{<t})$. Throughout our theoretical analysis, we make the assumption that the classification mode recovers the true training conditional, i.e., $\mathcal{M}^c(x) = \Ptr(Y=1\mid X=x)$. When analyzing Model-Weighted CTTA (Section~\ref{subsec:mw-ctta-def}), we additionally assume that the likelihood model recovers the training marginal of $X$, i.e., $\mathcal{M}^l(x) = \Ptr(X=x)$.

\textit{(2) Counterfactual shortcut editor:} a model $\tau: \Xcal \times \Scal \rightarrow \Xcal$ that, given a text $x$ and target shortcut value $s$, generates a counterfactual version of $x$ in which the shortcut is set to $s$ and the remaining content is unchanged. Formally, for $x' = \tau(x, s)$, it holds that $\es(x') = s$ and $\ey(x) = \ey(x')$. We additionally assume that all counterfactual edits lie in the support of the
training distribution. In practice, we implement $\tau$ with a language model prompted to change only the parts of the text that express the shortcut.

\textit{(3) Unlabeled test data:} a set of $N$ unlabeled texts  $\Btst := \{x_i\}_{i=1}^N$ where each $x \sim \Pte(X)$.

\section{Methods}\label{sec:methods}
To address the problem of test-time shortcut removal, we propose a family of methods that we call Counterfactual Test-Time Augmentation (CTTA). We first define this family and show that the optimal shortcut-invariant predictor belongs to it (Section~\ref{subsec:ctta-fam}). We then introduce Model-Weighted CTTA and establish conditions under which it recovers the optimal predictor (Section~\ref{subsec:mw-ctta-def}). Next, we analyze Fixed-Weight CTTA and show that it achieves the same AUROC as the optimal predictor, despite generally being miscalibrated (Section~\ref{subsec:fw-ctta-def}). Finally, we introduce Select-CTTA, a label-free procedure that adaptively selects between Model-Weighted and Fixed-Weight CTTA (Section~\ref{subsec:select-ctta-def}). 

\subsection{The CTTA Family}\label{subsec:ctta-fam}
Given a test example, each CTTA method uses the editor to produce a counterfactual edit for each shortcut value, queries the trained LLM on each edited example, and returns a weighted average of the resulting predictions. Formally, we define a CTTA predictor as:
\begin{definition}[CTTA predictor]\label{def:tta}
Let $\mathbf{w} := \weightset$ be a set of weight functions $w_s : \Xcal \to [0,1]$ satisfying $\sum_{s \in \Scal} w_s(x) = 1$ for every $x \in \Xcal$. The CTTA predictor with weights $\mathbf{w}$ is
\begin{equation}\label{eq:tta}
    \fw(x) := \sum_{s \in \Scal} w_s(x)\, \classifier\big(\tau(x, s)\big).
\end{equation}
\end{definition}
Next, we characterize the shortcut-invariance and optimality of CTTA predictors. 

\textbf{Shortcut Invariance.} Evaluating the LLM on counterfactual edits removes dependence on the shortcut value originally observed in $x$, making the weight functions the only potential source of shortcut dependence. Therefore, a CTTA predictor is shortcut-invariant if its weight functions depend on the text only through its label-relevant content, i.e., $w_s(x) = w_s(x')$ whenever $\ey(x) = \ey(x')$ (proof in Appendix~\ref{app:ctta-si}). Fixed weights satisfy this condition automatically, while Model-Weighted CTTA satisfies it under the assumptions introduced below.

\textbf{Optimality.} The optimal shortcut-invariant predictor can be recovered through CTTA with weights $w^*_s(x) := \Ptr\big(S = s \mid \Xy = \ey(x)\big)$. Intuitively, the optimal weights marginalize out the shortcut $S$ while conditioning on the label relevant content $\Xy$. We state this formally in Proposition~\ref{prop:opt-inv-pred} and provide a proof in Appendix~\ref{app:ctta-opt}.

\begin{restatable}{proposition}{OptCTTAPredictor}\label{prop:opt-inv-pred}
For each $s \in \Scal$, let $w^*_s(x) := \Ptr\big(S = s \mid \Xy = \ey(x)\big)$, and write $\wstar = \{w^*_s\}_{s \in \Scal}$. Then for all $x \in \Xcal$, $f_{\wstar}(x) = f^*(x)$.
\end{restatable}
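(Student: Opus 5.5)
The plan is to expand $f_{\wstar}(x)$ term by term and then marginalize out $S$ under the training distribution. Fix $x \in \Xcal$ and write $\xy := \ey(x)$. For each $s \in \Scal$, let $x_s := \tau(x,s)$. The editor guarantees $\ey(x_s) = \xy$ and $\es(x_s) = s$, and that $x_s$ lies in the support of $\Ptr$.

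By the assumption on classification mode, $\classifier(x_s) = \Ptr(Y=1 \mid X = x_s)$. The first key step is to show that
\[
\Ptr(Y=1 \mid X=x_s) = \Ptr(Y=1 \mid \Xy=\xy, S=s).
\]
Because $\Xy = \ey(X)$ and $S = \es(X)$ are deterministic functions of $X$, conditioning on $X = x_s$ also fixes $(\Xy,S) = (\xy,s)$. The factorization $\Ptr(X \mid \Xy, S)$ in the DAG says that, given $(\Xy,S)$, $X$ carries no further information about $Y$. Formally,
\[
\Ptr(Y, X=x_s) = \Ptr(Y)\,\Ptr(S=s\mid Y)\,\Ptr(\Xy=\xy\mid Y)\,\Ptr(x_s\mid \xy,s),
\]
and the last factor does not depend on $Y$, so it cancels upon normalizing. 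Positivity together with the support assumption on the edits makes every denominator nonzero.

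Substituting into Definition~\ref{def:tta} gives
\[
f_{\wstar}(x) = \sum_{s} \Ptr(S=s\mid \Xy=\xy)\,\Ptr(Y=1\mid \Xy=\xy,S=s).
\]
By the law of total probability this sum equals $\Ptr(Y=1\mid \Xy=\xy)$. It remains to show this coincides with $f^*(x) = P(Y=1\mid \Xy=\xy)$ for any $P\in\Pc$. That holds because $\Ptr \in \Pc$ (take $P(S\mid Y) = \Ptr(S\mid Y)$). Alternatively, one can note directly that $P(Y,\Xy) = \Ptr(Y)\Ptr(\Xy\mid Y)$ after summing out $S$ and $X$, and this joint is the same for every member of $\Pc$.

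The main obstacle is the first step, namely justifying rigorously that conditioning on the full text $X=x_s$ reduces to conditioning on $(\xy,s)$. I will handle it by the explicit factorization computation above, taking care that $\Ptr(X=x_s \mid \xy, s) > 0$, which follows from the support assumption on counterfactual edits. The remaining steps are routine marginalization.
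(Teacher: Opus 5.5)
Your proposal is correct and matches the paper's proof step for step: you reduce $\classifier(\tau(x,s))$ to $\Ptr(Y=1\mid \Xy=\ey(x),S=s)$ using the editor properties and $X \perp Y \mid (\Xy,S)$, then apply the law of total probability, then identify the result with $f^*$ via $\Ptr\in\Pc$. Your explicit factorization computation (with the positivity and support checks) is just a more detailed justification of the conditional independence that the paper reads off the DAG directly.
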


The optimal weights depend on the training distribution and are generally not identifiable from the trained LLM and unlabeled test data alone. We next present assumptions under which they can be recovered using the LLM's token probabilities.

\subsection{Model-Weighted CTTA}\label{subsec:mw-ctta-def}
We consider a CTTA predictor that weights each counterfactual edit by its probability under the trained LLM. We define \textit{Model-Weighted CTTA} weights as
\[
w_s^{\mathrm{MW}}(x)
:=
\frac{\mathcal{M}^l(\tau(x,s))}
{\sum_{s' \in \Scal}\mathcal{M}^l(\tau(x,s'))}.
\]
We next give two assumptions under which these weights recover the optimal CTTA weights.

\textit{(A1) Idealized Likelihood Model.}
The likelihood mode of the trained LLM recovers the
training marginal of $X$, i.e.,
$\mathcal{M}^l(x) = \Ptr(X=x)$ for all $x \in \Xcal$. 

Together with our standing assumption that the classification mode
satisfies $\mathcal{M}^c(x) = \Ptr(Y=1 \mid X=x)$, (A1) implies that the likelihood and classification modes both reflect the same underlying training distribution $\Ptr$. We refer to this property as \textit{likelihood-classifier alignment}.

\textit{(A2) Idealized Editor.} For each example $x$, the editor $\tau$ creates edits whose probabilities under the training distribution differ only because of the shortcut value. That is, any residual variation in the text (e.g., length, fluency, or phrasing) contributes equally to the probability of each edit. Formally, for all $s,s'\in\Scal$,
\[
\begin{aligned}
&\Ptr\big(X=\tau(x,s)\mid \Xy=\ey(x),S=s\big)\\
&\qquad=\Ptr\big(X=\tau(x,s')\mid \Xy=\ey(x),S=s'\big).
\end{aligned}
\]

Under (A1) and (A2), the likelihood of each counterfactual edit is
proportional to the joint training probability of its label-relevant
content and shortcut value:
\[
\lmodel\big(\tau(x,s)\big)
\propto \Ptr\big(\Xy=\ey(x), S=s\big),
\]
where the proportionality constant does not depend on $s$.
Normalizing therefore gives:
\[
w_s^{\mathrm{MW}}(x)
=
\Ptr\big(S=s\mid\Xy=\ey(x)\big)
=
w_s^*(x).
\]

Thus, Model-Weighted CTTA uses the optimal CTTA weights and recovers the
optimal shortcut-invariant predictor. We state this formally in
Proposition~\ref{prop:mw-ctta-opt} and provide a proof in
Appendix~\ref{app:mw-ctta-opt}.

\begin{restatable}{proposition}{ModelWeightedOptimality}
\label{prop:mw-ctta-opt}
Suppose (A1) and (A2) hold. For each $s \in \Scal$, define
\[
w_s^{\mathrm{MW}}(x)
:=
\frac{\mathcal{M}^l(\tau(x,s))}
{\sum_{s' \in \Scal}\mathcal{M}^l(\tau(x,s'))},
\]
and write
$\mathbf{w}^{\mathrm{MW}}
:= \{w_s^{\mathrm{MW}}\}_{s \in \Scal}$.
Then, for all $x \in \Xcal$, $f_{\mathbf{w}^{\mathrm{MW}}}(x)=f^*(x)$.
\end{restatable}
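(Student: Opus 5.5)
The plan is to reduce the statement to Proposition~\ref{prop:opt-inv-pred}: it suffices to show that $w_s^{\mathrm{MW}}(x) = w_s^*(x)$ for every $s \in \Scal$ and every $x \in \Xcal$. Once the weight functions agree pointwise, the predictors agree pointwise, because $f_{\mathbf{w}^{\mathrm{MW}}}(x)$ and $f_{\wstar}(x)$ are the same weighted sum of the values $\classifier(\tau(x,s))$. Proposition~\ref{prop:opt-inv-pred} then gives $f_{\wstar}(x) = f^*(x)$.

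To establish the equality of weights, I would fix $x$, write $\xy := \ey(x)$, and set $x_s := \tau(x,s)$.

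\textbf{Step 1 (editor properties).} By the editor properties, $\es(x_s) = s$ and $\ey(x_s) = \xy$. Since $\Xy$ and $S$ are deterministic functions of $X$, the event $\{X = x_s\}$ is contained in $\{\Xy = \xy, S = s\}$. Hence
\[
\Ptr(X = x_s) = \Ptr(X = x_s \mid \Xy = \xy, S = s)\,\Ptr(\Xy = \xy, S = s).
\]

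\textbf{Step 2 (likelihood model).} By (A1), the left-hand side equals $\lmodel(x_s)$.

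\textbf{Step 3 (idealized editor).} By (A2), the conditional factor $c(x) := \Ptr(X = x_s \mid \Xy = \xy, S = s)$ does not depend on $s$. Therefore $\lmodel(x_s) = c(x)\,\Ptr(\Xy = \xy, S = s)$.

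\textbf{Step 4 (normalize).} Normalizing over $s' \in \Scal$ cancels $c(x)$ and gives
\[
\frac{\Ptr(\Xy = \xy, S = s)}{\sum_{s'} \Ptr(\Xy = \xy, S = s')} = \Ptr(S = s \mid \Xy = \xy) = w_s^*(x).
\]

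The main obstacle is making sure the conditioning and normalization are well defined. The conditional in (A2) requires $\Ptr(\Xy = \xy, S = s) > 0$, and the normalization requires a nonzero denominator, so that we never cancel $c(x) = 0$. I would handle both with the positivity assumptions. Since $\Ptr(\Xy = \xy \mid Y = y) > 0$ and $\Ptr(S = s \mid Y = y) > 0$ for all $y$, and since under the causal DAG $S$ is independent of $\Xy$ given $Y$ (both in $\Ptr$ and in every member of $\Pc$), we get
\[
\Ptr(\Xy = \xy, S = s) = \sum_y \Ptr(y)\,\Ptr(\xy \mid y)\,\Ptr(s \mid y) > 0.
\]
Next, the assumption that edits lie in the training support gives $\Ptr(X = x_s) > 0$, and hence $c(x) > 0$. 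A subtle point is that (A1) is stated as an equality of probabilities of finite sequences, so $\lmodel(x_s)$ is exactly this positive number and the ratio is well defined.
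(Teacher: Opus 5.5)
Your proposal is correct and follows the paper's proof essentially step for step. You factor $\Ptr(X=\tau(x,s))$ through $(\Xy,S)$, use (A1) and (A2) to write $\lmodel(\tau(x,s)) = c(x)\,\Ptr(\Xy=\ey(x),S=s)$, normalize to get $w_s^{\mathrm{MW}} = w_s^*$, and finish with Proposition~\ref{prop:opt-inv-pred}. Your well-definedness checks go beyond what the paper writes out. Positivity of $\Ptr(\Xy=\xy,S=s)$ follows most directly from $\Ptr(\Xy=\xy,S=s)\ge\Ptr(X=\tau(x,s))>0$, which also shows that $\xy$ lies in the support of $\Xy$, as your sum-over-$y$ argument tacitly requires.
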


\subsection{Fixed-Weight CTTA}\label{subsec:fw-ctta-def}  Since the assumptions of Model-Weighted CTTA might not hold in practice, we also consider Fixed-Weight CTTA methods, which do not need a weight estimation step. We define \textit{fixed weights} as those that do not depend on the input example $x$, i.e., $w_s(x) = w_s$ for all $x \in \Xcal$. Two natural choices are \textit{Uniform CTTA}, which assigns equal weight to each shortcut value, $w_s=1/|\Scal|$, and \textit{Neutral CTTA}, which places all weight on a designated neutral value $s_0$, i.e., $w_{s_0}=1$ and $w_s=0$ for $s\neq s_0$. For example, when gender is the shortcut, the neutral edit may remove all references to gender from the text.

To understand the relationship between fixed weight predictors and $f^*$, we first consider the prediction that the trained LLM $\classifier$ makes on a single counterfactual edit. In our assumed DAG (Figure~\ref{fig:causal-dag}), the label-relevant content $\Xy$ and the shortcut $S$ are conditionally independent given the label $Y$. As a result, the model's log-odds decompose into a term determined by the label-relevant content and a term determined only by the shortcut value (proof in Appendix~\ref{app:s-edit-lo}):
\begin{equation}\label{eq:edit-logit-decomp}
\begin{aligned}
\operatorname{logit}\classifier(\tau(x,s))
&=
\operatorname{logit}f^*(x)
+
\lambda(s),\\
\lambda(s)
&:=
\log
\frac{\Ptr(S=s\mid Y=1)}
     {\Ptr(S=s\mid Y=0)}.
\end{aligned}
\end{equation}
This shows that $\classifier$ shifts the optimal log-odds by a constant that depends only on the shortcut value: $\lambda(s) > 0$ if $s$ is more common among training examples with label $1$ than among those with label $0$, while $\lambda(s)=0$ if $s$ is equally prevalent in the two classes. Substituting the decomposition into Equation~\ref{eq:tta}, we obtain
\begin{equation}\label{eq:fixed-weight-function}
\fw(x)
=
\sum_{s\in\Scal}
w_s\,
\sigma\!\left(
\operatorname{logit}f^*(x)+\lambda(s)
\right),
\end{equation}
where $\sigma(z)=(1+e^{-z})^{-1}$. For any fixed weights $\mathbf{w}$, the right-hand side is strictly increasing in $\operatorname{logit}f^*(x)$. Thus, fixed-weight CTTA preserves the ranking induced by $f^*$ and obtains optimal AUROC. We state this formally in Proposition~\ref{prop:fixed-weight-auroc} and provide a proof in Appendix~\ref{app:fw-ctta-rank-opt}.
\begin{restatable}{proposition}{FixedWeightAUROC}
\label{prop:fixed-weight-auroc}
For any fixed weights $\mathbf{w}=\{w_s\}_{s\in\Scal}$,
\[
\fw(x) > \fw(x')
\quad\Longleftrightarrow\quad
f^*(x) > f^*(x')
\]
for all $x,x'\in\Xcal$. Consequently, $\fw$ and $f^*$ achieve the same AUROC under every $P\in\Pc$.
\end{restatable}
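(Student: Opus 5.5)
The plan is to reduce everything to the representation in Equation~\ref{eq:fixed-weight-function}. This representation follows from the log-odds decomposition in Equation~\ref{eq:edit-logit-decomp}, which we take as established. With it, $\fw(x) = g(\operatorname{logit} f^*(x))$, where
\[
g(z) := \sum_{s\in\Scal} w_s\,\sigma\bigl(z+\lambda(s)\bigr).
\]
The key point is that $g$ depends only on the fixed weights and on $\lambda$. Both are determined by the training distribution and not by $x$, so $\fw$ factors through $f^*$ via a single function $g$ that is common to all inputs.

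The main step is to show that $g$ is strictly increasing on $\mathbb{R}$. Each summand $\sigma(z+\lambda(s))$ is strictly increasing in $z$, since $\sigma' = \sigma(1-\sigma) > 0$. Finiteness of $\lambda(s)$ is guaranteed by the positivity assumption $\Ptr(S=s\mid Y=y)>0$. The weights are nonnegative and sum to one, so at least one weight is strictly positive. Then $g'(z) = \sum_s w_s\,\sigma'(z+\lambda(s)) > 0$, and $g$ is strictly increasing.

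One technicality must be handled: $\operatorname{logit} f^*(x)$ must be finite, so that $f^*(x)\in(0,1)$. We obtain this from positivity, since $\Ptr(Y=y)>0$ and $\Ptr(\Xy=\xy\mid Y=y)>0$ give $0<f^*(x)<1$ by Bayes' rule. This is the step that needs some care but is not a real obstacle.

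Since $\operatorname{logit}$ is strictly increasing on $(0,1)$, the composition $h := g\circ\operatorname{logit}$ is strictly increasing and $\fw = h\circ f^*$. A strictly increasing map preserves strict order in both directions. If $f^*(x) > f^*(x')$ then $\fw(x) > \fw(x')$. Conversely, if $f^*(x) \le f^*(x')$ then $\fw(x) \le \fw(x')$. Together these give the stated equivalence.

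For the AUROC claim, we use the fact that for any $P\in\Pc$, AUROC equals
\[
\Pr\bigl(f(X)>f(X')\bigr) + \tfrac12 \Pr\bigl(f(X)=f(X')\bigr),
\]
where $X\sim P(\cdot\mid Y=1)$ and $X'\sim P(\cdot\mid Y=0)$ are independent. Strict monotonicity of $h$ preserves both the event $\{f(X)>f(X')\}$ and the tie event $\{f(X)=f(X')\}$, by the equivalence just proved applied in both directions. The two predictors therefore have identical AUROC under every $P\in\Pc$. This holds pointwise in $(x,x')$, so no property of $P$ beyond the model class is needed.
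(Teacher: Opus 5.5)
Your proposal is correct and follows the paper's proof essentially step for step: you write $\fw = g(\operatorname{logit} f^*)$ via Lemma~\ref{lem:edit-logit-decomp}, show $g'>0$ from $\sigma'>0$ and nonnegative weights summing to one, compose with the strictly increasing logit, and conclude that the two predictors induce the same ranking and hence the same AUROC. Your extra checks are details the paper leaves implicit, and you handle them correctly: finiteness of $\lambda(s)$ and of $\operatorname{logit} f^*(x)$ from the positivity assumptions, and the explicit preservation of ties in the AUROC formula.
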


\subsection{Select-CTTA}\label{subsec:select-ctta-def}
Model-Weighted CTTA can recover the optimal shortcut-invariant predictor, but only under assumptions that may fail in practice. We therefore introduce Select-CTTA, a procedure for choosing between Model-Weighted and fixed-weight CTTA at test time. It uses a batch of unlabeled test data to perform three diagnostic tests. The first examines the strength of the shortcut, while the second two provide
complementary diagnostics for violations of the Model-Weighted CTTA
assumptions. If all tests pass, it uses Model-Weighted CTTA. Otherwise, it uses Uniform CTTA. We prefer Uniform to Neutral CTTA because identifying a truly neutral shortcut value may be infeasible in practice. We provide details on how to implement the tests in Appendix~\ref{app:select-ctta}.

\textbf{Test 1: Shortcut Strength.} If the shortcut has a weak association with the label, then $\lambda(s)$ is small, and fixed weight predictors are near-optimal (see Equation~\ref{eq:fixed-weight-function}). In this regime, estimating the optimal weights offers little potential benefit and can introduce additional error, so we choose Uniform CTTA.

Ideally, to assess shortcut strength, we'd examine $\lambda(s)$. However, the values $\lambda(s)$ are not directly estimatable because $f^*$ is unknown. Instead, we estimate shortcut strength using the model's predictions on counterfactual edits. By Equation~\ref{eq:edit-logit-decomp}, for any
$s,s'\in\Scal$,
\[
\operatorname{logit}\classifier(\tau(x,s))
-
\operatorname{logit}\classifier(\tau(x,s'))
=
\lambda(s)-\lambda(s').
\]
Thus, $|\lambda(s)-\lambda(s')|$ measures how much changing the shortcut from $s'$ to $s$ shifts the model's prediction, making it a measure of shortcut strength. For each test example $x_i$, we define the shift between shortcut values $s$ and $s'$ as
\[
d_i(s,s')
:=
\operatorname{logit}\classifier(\tau(x_i,s))
-
\operatorname{logit}\classifier(\tau(x_i,s')).
\]
We compute the mean shift, $\Delta_{s,s'} := \frac{1}{N}\sum_{i=1}^N d_i(s,s')$. We define the estimated shortcut strength as the largest absolute mean shift over shortcut value pairs, $\Delta := \max_{s,s'\in\Scal} \left|\Delta_{s,s'}\right|$. We use Uniform CTTA when $\Delta < \delta_{\mathrm{str}}$, where $\delta_{\mathrm{str}}$ is a shortcut-strength threshold. We discuss how to choose $\delta_{\mathrm{str}}$, and the corresponding parameters for the other tests, in Appendix~\ref{app:select-ctta}.

\textbf{Test 2: Weight Collapse.} We check whether the Model-Weighted CTTA weights have collapsed, meaning that they systematically assign most of the weight to the same shortcut value across examples. Though collapse does not necessarily indicate an assumption violation, assumption violations can lead to collapse. A violation of (A1) can cause collapse if the discrepancy between the LLM's likelihood and the true training probability varies systematically with the shortcut value.  A violation of (A2) can cause collapse if the editor systematically produces more probable edits for one shortcut value. For example, edits to one value may involve only a small, natural substitution, while edits to another may require longer or less fluent rewrites.

We measure weight collapse by computing, for each shortcut value, its median weight across the test batch, and taking the largest of these values:
\[
C
:=
\max_{s\in\Scal}
\operatorname{median}_{i=1,\ldots,N}
w_s^{\mathrm{MW}}(x_i).
\]
If $C$ exceeds a threshold $\delta_{\mathrm{col}}$, we treat the weights as collapsed and use Uniform CTTA. We additionally apply an analogous check within the pair of shortcut values with the largest estimated shortcut shift, flagging cases where one member systematically receives much more weight than the other. This can reveal pairwise collapse that is obscured when weight is spread across other shortcut values. 

Although this test catches some cases in which (A1) or (A2) is violated, it is not comprehensive: the weights can be incorrect  without being collapsed. Test 3 provides a complementary check of assumption violations.

\textbf{Test 3: Likelihood-Classifier Alignment.} Under our problem setup, assumption (A1) implies that the likelihood mode $\mathcal{M}^l$ and the classification mode $\mathcal{M}^c$ of the LLM reflect the same underlying training distribution. Thus, the shortcut-label relationship encoded by each should be consistent. For example, if changing the shortcut from $s'$ to $s$ makes $\mathcal{M}^c$ more likely to predict $Y=1$, then the probability that $\mathcal{M}^l$ puts on $s$ relative to $s'$ should be higher for examples whose label-relevant content is more indicative of class $1$.

We focus on the pair of shortcut values $(s,s')$ with the largest estimated shortcut shift, since it provides the strongest signal for testing this alignment. Without loss of generality, we order the pair so that changing the shortcut from $s'$ to $s$ increases the classifier's predicted probability of label 1. We would like to rank examples by how strongly their label-relevant content indicates class $1$, i.e., by $f^*(x)$. By Equation~\ref{eq:edit-logit-decomp}, any single counterfactual edit induces the same ranking as $f^*(x)$ and can be used. In practice, we use the average logit across edits for a more stable ranking. For each example, we measure the relative weight on $s$ as,
\[
r_s(x)
:=
\frac{w_s^{\mathrm{MW}}(x)}
{w_s^{\mathrm{MW}}(x)+w_{s'}^{\mathrm{MW}}(x)}.
\]
Under likelihood-classifier alignment and (A2), $r_s(x)$ should increase with $f^*(x)$. We therefore compute the Spearman rank correlation $\rho$ between $r_s(x)$ and the average classifier logit across edits (which has the same ranking as $f^*$). If $\rho < \delta_{\mathrm{align}}$, where $\delta_{\mathrm{align}}>0$ is an alignment threshold, the test fails and Select-CTTA uses Uniform CTTA. Such misalignment can arise if the likelihood mode encodes a different shortcut-label relationship from the classifier, violating (A1), or if editor errors distort the likelihoods in an example-dependent way, violating (A2).

\section{Experiments}

\subsection{Tasks}  
We examine two different binary classification tasks. Data processing details are in Appendix~\ref{app:data}.

\textbf{Math Grading (Sycophancy).} LLMs have been shown to exhibit sycophantic behavior when providing feedback on math solutions \citep{sharma2024towards}. In this task, the input $X$ is a problem and a proposed solution, and the label $Y \in \{0, 1\}$ indicates whether the solution is correct. We use the PRM800K dataset \citep{lightman2023lets}, which contains problems from MATH \citep{hendrycksmath2021} with model-generated solutions and human-provided correctness labels. As the shortcut $S$, we insert one of three opinion statements at the end of the input: \textit{I think this solution is correct}, \textit{I think this solution is incorrect}, and \textit{I'm unsure whether this solution is correct}. 

\textbf{Occupation Classification (Gender).} We use the Bias in Bios dataset \citep{de2019bias}, a text classification benchmark introduced to study gender bias in occupation prediction. We use biographies labeled as attorney or paralegal. Each input $X$ is a biography, the label $Y \in \{0, 1\}$ is occupation, and the shortcut $S \in \{\text{man}, \text{woman}\}$ is the gender of the biography subject.

\subsection{Experimental Settings}\label{subsec:exp-settings}

\textbf{Training Distributions.} In our first experiments, we fine-tune models on data with a controlled shortcut strength given by  $p \in [0.5, 1]$. For a designated shortcut value $s$, we set $\Ptr(S = s \mid Y = 0) = p$ and  $\Ptr(S = s \mid Y = 1) = 1 - p$. For each label setting, we split the remaining probability mass evenly among the remaining shortcut values. Our main experiments use $p=0.95$ and $\Ptr(Y=1)=0.5$. Appendix~\ref{app:shortcut-strength} reports results across a range of shortcut strengths.

\textbf{Test Distributions.} For each training distribution, we reverse the shortcut-label association at test time:
$\Pte(S=s\mid Y=1)=\Ptr(S=s\mid Y=0)$ and
$\Pte(S=s\mid Y=0)=\Ptr(S=s\mid Y=1)$. We use $\Pte(Y=1)=0.5$. For the math task, we use 2,600 examples, and for the occupation task, we use 582 examples. Appendix~\ref{app:test-distr} reports results under milder distribution shifts.

\textbf{Models.} We use six instruction-tuned LLMs from three open-weight model families: Gemma 3 (4B and 12B; \citealp{team2025gemma}), Mistral (7B and Nemo 12B; \citealp{jiang2023mistral7b}), and Qwen 2.5 (1.5B and 7B; \citealp{yang2024qwen2}). 

\textbf{Model Settings.} We evaluate each LLM under three training conditions: \textit{all-tokens finetuning}, where we fine-tune the model to predict both the input tokens and classification label; \textit{completion-only finetuning}, where we fine-tune only on the classification label; and \textit{zero-shot}, where we do not fine-tune at all.

\textbf{Baselines.} We compare the CTTA methods to each other and to existing test-time methods. For the math task, we consider two Neutral CTTA variants: \textit{unsure}, which uses the unsure opinion statement, and \textit{removal}, which omits an opinion statement. For occupation, Neutral CTTA uses a language model to remove gender references. In our zero-shot experiments, we compare to Out-of-Context (OOC) prompting \citep{cotta2024test}, a shortcut-robustness method designed for the zero-shot setting. OOC is similar to Uniform CTTA, but it generates edits with the predictor LLM itself, and its editing prompt instructs the model to preserve a stratification variable, which in the anti-causal setting of Figure~\ref{fig:causal-dag} is the LLM's predicted label. We also compare to Batch Calibration (BC) \citep{zhou2024batch} and Contextual Calibration (CC) \citep{zhao2021calibrate}, two test-time calibration methods. They are not designed specifically for shortcut removal, so we additionally introduce shortcut-conditioned variants of each, which perform calibration separately for each shortcut value. In our finetuning experiments, we also report the performance of models fine-tuned on balanced data. We consider this a reference rather than a baseline, since it assumes access to labeled training data and model weights.

\textbf{Counterfactual Editor.} For the math task, the shortcut edits are straightforward and can be applied in a rule-based manner. For the occupation task, we perform gender edits using Claude Opus (details in Appendix~\ref{app:exps-editor}).

\textbf{Hyperparameters.} In Select-CTTA, we set $\delta_{\mathrm{str}} = 0.5$, $\delta_{\mathrm{col}} = 0.8$, and $\delta_{\mathrm{align}} = 0.5$. During model inference, we use a temperature of $1$ in all experiments.

\textbf{Metrics.} We evaluate how well predictions rank examples using AUROC. We measure calibration using Expected Calibration Error (ECE), with results for additional metrics in Appendix~\ref{app:metrics}.

\subsection{Results}

\begin{table}[t]
\centering
\footnotesize
\caption{Results for models fine-tuned with shortcut strength $p=0.95$ and evaluated on the reversed test split. Values are averaged over models. Bold: best in the column. Underlined: not significantly different from the best (95\% paired bootstrap interval of the difference includes zero). CTTA methods substantially outperform test-time baselines in most cases, with Select-CTTA consistently among the top-performing methods.}
\label{tab:ft-results}
\setlength{\tabcolsep}{4pt}
\begin{tabular}{lcccccccc}
\toprule
 & \multicolumn{4}{c}{All-tokens loss} & \multicolumn{4}{c}{Completion-only loss} \\
\cmidrule(lr){2-5}\cmidrule(lr){6-9}
 & \multicolumn{2}{c}{Math } & \multicolumn{2}{c}{Occupation } & \multicolumn{2}{c}{Math} & \multicolumn{2}{c}{Occupation } \\
\cmidrule(lr){2-3}\cmidrule(lr){4-5}\cmidrule(lr){6-7}\cmidrule(lr){8-9}
Method & AUROC $\uparrow$ & ECE $\downarrow$ & AUROC $\uparrow$ & ECE $\downarrow$ & AUROC $\uparrow$ & ECE $\downarrow$ & AUROC $\uparrow$ & ECE $\downarrow$ \\
\midrule
Standard (No Aug) & 0.062 & 0.894 & 0.271 & 0.572 & 0.077 & 0.893 & 0.378 & 0.506 \\
BC & 0.062 & 0.895 & 0.271 & 0.581 & 0.077 & 0.893 & 0.378 & 0.506 \\
BC (shortcut-cond.) & 0.436 & 0.201 & 0.550 & 0.262 & 0.430 & 0.201 & 0.613 & 0.325 \\
CC & 0.062 & 0.860 & 0.271 & 0.577 & 0.077 & 0.820 & 0.378 & 0.485 \\
CC (shortcut-cond.) & 0.111 & 0.543 & \textbf{0.933} & 0.160 & 0.430 & 0.213 & 0.890 & 0.204 \\
\cmidrule(lr){1-9}
Neutral CTTA (removal) & 0.659 & 0.312 & 0.919 & 0.079 & 0.662 & 0.369 & \underline{0.942} & \textbf{0.117} \\
Neutral CTTA (unsure) & 0.700 & 0.447 & -- & -- & \underline{0.673} & 0.451 & -- & -- \\
Uniform CTTA & 0.700 & 0.153 & 0.913 & 0.176 & \textbf{0.676} & \textbf{0.155} & \textbf{0.945} & 0.151 \\
Model-Weighted CTTA & \textbf{0.745} & \textbf{0.141} & \underline{0.930} & \textbf{0.051} & 0.654 & 0.315 & 0.814 & 0.229 \\
Select CTTA & 0.725 & 0.152 & \underline{0.930} & \textbf{0.051} & \textbf{0.676} & \textbf{0.155} & \textbf{0.945} & 0.151 \\
\midrule
\textit{Balanced Training (Ref.)} & \textit{0.688} & \textit{0.132} & \textit{0.938} & \textit{0.067} & \textit{0.791} & \textit{0.066} & \textit{0.945} & \textit{0.078} \\
\bottomrule
\end{tabular}

\end{table}

\textbf{Fine-tuned Models.} We first evaluate the methods on fine-tuned models, where we control the strength of the shortcut. Table~\ref{tab:ft-results} reports results on the reversed test split. Across both tasks, CTTA methods substantially outperform existing test-time baselines.  In the all-tokens setting, Model-Weighted CTTA performs best. Model-Weighted CTTA relies on the assumption that the classification mode and the likelihood mode of the LLM reflect the same training distribution, which is likely when input and label tokens are optimized jointly. Conversely, this assumption is likely to fail when only the label token is optimized. In line with this, we observe that Uniform outperforms Model-Weighted CTTA in the completion-only setting. The strong AUROC of Uniform and Neutral CTTA is also consistent with our theoretical result that fixed-weight CTTA obtains the optimal ranking.

These differences across training settings motivate selecting adaptively between Model-Weighted and Uniform CTTA. Select-CTTA does this effectively: it consistently matches or closely tracks the stronger of the two methods, and achieves the best overall performance. Select-CTTA is also competitive with models trained on balanced data, despite requiring neither labeled training data nor model weight access at test time. In some settings, CTTA exceeds the balanced training reference; we discuss this in Appendix~\ref{app:bal-train-compare}.

\begin{table}[t]
\centering
\footnotesize
\caption{Results for models without fine-tuning, evaluated on the reversed test split. Values are averaged over models. Bold: best in the column. Underlined: not significantly different from the best (95\% paired bootstrap interval of the difference includes zero). Compared with non-CTTA baselines, Select-CTTA achieves higher AUROC and lower ECE on Math, and provides a favorable trade-off between AUROC and ECE on Occupation.}
\label{tab:zs-results}
\setlength{\tabcolsep}{2.5pt}
\begin{tabular}{lcccc}
\toprule
 & \multicolumn{2}{c}{Math} & \multicolumn{2}{c}{Occupation} \\
\cmidrule(lr){2-3}\cmidrule(lr){4-5}
Method & AUROC $\uparrow$ & ECE $\downarrow$ & AUROC $\uparrow$ & ECE $\downarrow$ \\
\midrule
Standard (No Aug) & 0.113 & 0.704 & 0.762 & 0.235 \\
BC & 0.113 & 0.779 & 0.762 & \underline{0.225} \\
BC (shortcut-cond.) & 0.489 & 0.323 & 0.564 & 0.315 \\
CC & 0.113 & 0.712 & 0.762 & 0.248 \\
CC (shortcut-cond.) & 0.187 & 0.596 & \textbf{0.806} & 0.293 \\
OOC & 0.552 & 0.242 & 0.777 & \textbf{0.214} \\
\cmidrule(lr){1-5}
Uniform CTTA & 0.609 & \textbf{0.152} & 0.792 & \underline{0.215} \\
Neutral CTTA (removal) & \underline{0.619} & 0.363 & 0.785 & \underline{0.218} \\
Neutral CTTA (unsure) & 0.606 & 0.307 & -- & -- \\
Model-Weighted CTTA & \textbf{0.621} & 0.364 & 0.793 & \underline{0.222} \\
Select CTTA & 0.609 & \textbf{0.152} & 0.792 & \underline{0.215} \\
\bottomrule
\end{tabular}

\end{table}

\textbf{Zero-Shot Models.} Table~\ref{tab:zs-results} reports results on the same test data used in the finetuning experiments, but for models evaluated without finetuning. The CTTA variants remain effective in this setting and are generally competitive with or outperform existing test-time baselines. The gains are especially pronounced on the math task, where CTTA improves the mean AUROC from $0.113$ for the unaugmented model to above $0.60$. In comparison, the strongest existing baseline achieves $0.552$ AUROC, while several others fail to improve over the unaugmented model.

Select-CTTA chooses Uniform CTTA consistently on both tasks. On math, Model-Weighted CTTA achieves slightly higher AUROC, but Uniform CTTA is much better calibrated; on occupation, the CTTA variants perform similarly. Overall, Select-CTTA achieves a strong balance of AUROC and ECE across both tasks.

\textbf{Select-CTTA Analysis.} We examine the decisions made by Select-CTTA for individual experimental settings in Figure~\ref{fig:select-diagnostics}. When Select-CTTA chooses Model-Weighted, Model-Weighted performs better on average in both AUROC and ECE (blue diamond). Conversely, when Select-CTTA chooses Uniform, Uniform performs better on average (orange diamond). Overall, Select-CTTA chooses the better-performing CTTA method most of the time. 

Select-CTTA tends to choose Model-Weighted CTTA in the settings where its assumptions are most plausible. In particular, it predominantly selects Model-Weighted CTTA for all-tokens fine-tuning, while favoring Uniform CTTA for completion-only fine-tuning and zero-shot models. Importantly, the selector does not observe the fine-tuning regime; this pattern emerges strictly from the label-free diagnostic tests. This supports the use of the diagnostics to determine when the assumptions underlying Model-Weighted CTTA are sufficiently well satisfied.

\begin{figure}[t]
  \centering
  \includegraphics[width=0.6\columnwidth]{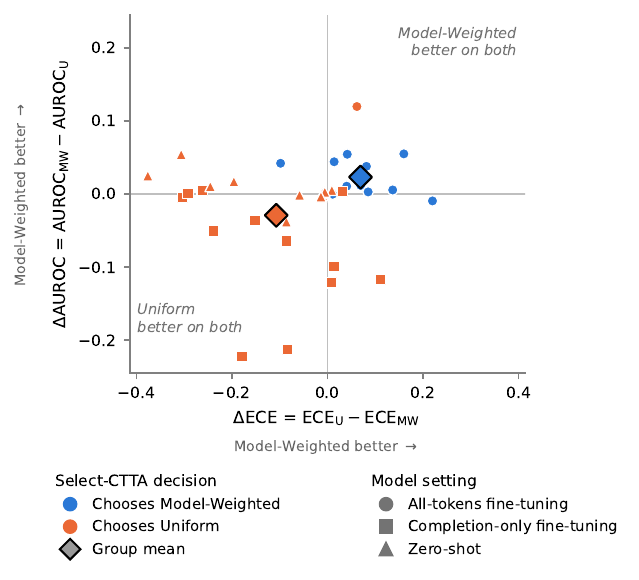}
  \caption{Select-CTTA decisions on the reversed test split ($p = 0.95$). Each point represents one model, fine-tuning setting, and task combination. Positive values indicate that Model-Weighted CTTA outperforms Uniform CTTA. Select-CTTA selects the better-performing CTTA method in most settings.}
  \label{fig:select-diagnostics}
\end{figure}

\section{Related Work}

Reliance on shortcuts is a well-documented failure mode of deep learning models \citep{beery2018recognition, geirhos2020shortcut}, including language models \citep{mccoy2019right, du2023shortcut}. Most existing shortcut removal methods are applied at training time. They include balancing and re-weighting the training data \citep{sagawa2019distributionally, liu2021just, idrissi2022simple}; training with regularization penalties that encourage invariance to the shortcut \citep{arjovsky2019invariant, veitch2021counterfactual, makar2022causally}; and counterfactual data augmentation (CDA), which augments the training set with edited examples in which the shortcut value is changed \citep{zmigrod2019counterfactual, feder2023causal}. CDA is the closest train-time analogue of our approach: it uses the same counterfactual edits, but requires training the model on them.

A smaller body of work examines test-time methods for shortcut removal. Most methods are designed to target a specific shortcut. For example, \citet{pezeshkpour2024large} and \citet{wang2024large} correct for positional bias by aggregating predictions over permutations of answer options, and \citet{weston2023system} mitigate sycophancy by prompting the model to rewrite the input with the user's opinion removed before answering. These approaches can be viewed as special cases of Uniform and Neutral CTTA, respectively.  Other work corrects LLM predictions without reference to a specific shortcut, by calibrating against content-free inputs or batch statistics \citep{zhao2021calibrate, zhou2024batch}. Closest to our work, Out-of-Context (OOC) prompting \citep{cotta2024test} replaces a user-specified shortcut with uniformly sampled values and aggregates the resulting predictions. OOC is similar to Uniform CTTA (see Section~\ref{subsec:exp-settings} for differences). The authors show that it achieves a form of shortcut invariance. They do not analyze which aggregation of the edited predictions is optimal, which is the central question of our work and the basis for Model-Weighted and Select-CTTA.

Finally, CTTA inherits its structure from test-time augmentation (TTA), which averages a model's predictions over transformed copies of an input \citep[e.g.,][]{krizhevsky2012imagenet, simonyan2014very}. Later work on TTA selects the augmentation policy \citep{lyzhov2020greedy} or learns the aggregation weights \citep{shanmugam2021better} from labeled validation data. In contrast, the transformations in CTTA are counterfactual edits of a known shortcut rather than generic label-preserving perturbations, and we derive the optimal aggregation weights analytically and recover them without labels.

\section{Discussion}
Shortcut reliance is a major obstacle to LLM safety and reliability. Most solutions for shortcut removal require resources that are often unavailable to LLM users, such as labeled training data and model weight access. We show that shortcut reliance can instead be substantially reduced at test time through Counterfactual Test-Time Augmentation (CTTA), which aggregates predictions across counterfactual edits of the shortcut. We characterize when different CTTA predictors are optimal, evaluate their performance empirically, and propose Select-CTTA, an adaptive procedure for selecting among them at test time. 

Our approach has several limitations that suggest directions for future work. CTTA assumes that the shortcut and its possible values are known; a useful extension would be to combine CTTA with methods for shortcut discovery. Our theory relies on assumptions about the distribution shift and the accuracy of the counterfactual editor that may fail to hold in practice. Although  we find that CTTA remains effective even for zero-shot models, where our assumptions are less likely to be satisfied, it would still be useful to design methods that are more robust to imperfect editors and broader distribution shifts. It would also be useful to develop alternatives to Model-Weighted CTTA that do not require access to token likelihoods, enabling its use with API-only models.

\bibliography{bib}
\bibliographystyle{bib}

\clearpage
\appendix

\section{CTTA Proofs}\label{app:ctta}

\subsection{Shortcut Invariance of CTTA Predictors}\label{app:ctta-si}

\begin{proposition}\label{prop:ctta-si}
Let $\mathbf{w}=\{w_s\}_{s\in\Scal}$ be a collection of weight functions that depend on the text only through its label-relevant content, i.e., $w_s(x)=w_s(x') \text{ for all } s\in\Scal \text{ whenever } \ey(x)=\ey(x').$ Then $\fw\in\Fcinv$.
\end{proposition}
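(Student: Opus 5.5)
The plan is to show first that $\fw(x)$ depends on $x$ only through $\ey(x)$, and then that such a predictor has the same risk under every $P\in\Pc$.

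\textbf{Step 1: $\fw$ factors through $\ey$.} By the editor property, $\ey(\tau(x,s))=\ey(x)$ and $\es(\tau(x,s))=s$. Using the standing assumption $\classifier(x)=\Ptr(Y=1\mid X=x)$, together with the DAG, we have $Y\perp S\mid \Xy$ fails in general but $Y \perp X \mid (\Xy,S)$ holds. Hence
\[
\classifier(\tau(x,s))=\Ptr\bigl(Y=1\mid \Xy=\ey(x),S=s\bigr),
\]
which is a function $g_s(\ey(x))$ only. (Equivalently, one can invoke the log-odds decomposition in Equation~\ref{eq:edit-logit-decomp}.) The weights $w_s(x)$ also depend only on $\ey(x)$ by hypothesis, so $\fw(x)=h(\ey(x))$ for some function $h$.

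\textbf{Step 2: the risk does not depend on $P(S\mid Y)$.} For $P\in\Pc$ we write
\[
R_P(\fw)=\mathbb{E}_P\bigl[\ell(h(\Xy),Y)\bigr]=\sum_{y}\Ptr(Y=y)\,\mathbb{E}_{\Ptr(\Xy\mid Y=y)}\bigl[\ell(h(\Xy),y)\bigr].
\]
This uses $\Xy=\ey(X)$, so that the pair $(\Xy,Y)$ has joint law $\Ptr(Y)\Ptr(\Xy\mid Y)$ under every $P\in\Pc$. That law is obtained by marginalizing out $S$ and $X$ from the factorization defining $\Pc$. The only $P$-dependent factor, $P(S\mid Y)$, sums to one over $s$. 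The right-hand side is therefore identical for all $P\in\Pc$, so $\fw\in\Fcinv$.

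\textbf{Main obstacle.} The delicate step is Step 1, namely justifying that $\classifier(\tau(x,s))$ is a function of $(\ey(x),s)$ alone. This requires that the edit lies in the training support, which is assumed, so the conditional probability is well defined. It also requires that $X$ carries no label information beyond $(\Xy,S)$, which follows from the factor $\Ptr(X\mid \Xy,S)$ in the DAG. I would make this explicit by Bayes' rule:
\[
\Ptr(Y=1\mid X=x')\propto \Ptr(Y=1)\,\Ptr(S=s\mid Y=1)\,\Ptr(\Xy=\ey(x')\mid Y=1)\,\Ptr(x'\mid \ey(x'),s).
\]
The last factor cancels between $y=0$ and $y=1$, leaving a function of $(\ey(x'),s)$ only.
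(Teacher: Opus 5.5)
Your proposal is correct and follows essentially the same route as the paper. Like the paper, you show $\classifier(\tau(x,s))=\Ptr(Y=1\mid \Xy=\ey(x),S=s)$ from $X\perp Y\mid(\Xy,S)$ and the editor properties, conclude $\fw=h\circ\ey$, and then use that every $P\in\Pc$ induces the same joint law $\Ptr(Y)\Ptr(\Xy\mid Y)$ on $(Y,\Xy)$. Your explicit Bayes' rule computation is a useful way to make the first step concrete.

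One caution: don't offer Equation~\ref{eq:edit-logit-decomp} as an alternative justification. The paper's proof of that decomposition (Lemma~\ref{lem:edit-logit-decomp}) itself relies on the identity you are establishing here, so citing it would be circular. Your main argument does not need it.
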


\begin{proof}
We first show that $\fw(x)$ depends on $x$ only through $\ey(x)$. Fix $x\in\Xcal$ and $s\in\Scal$, and let
\[
x' := \tau(x,s).
\]
By the editor assumptions, $\ey(x')=\ey(x)$ and $\es(x')=s$. Moreover, the DAG in Figure~\ref{fig:causal-dag} implies
\[
X \perp Y \mid (\Xy,S).
\]
Since $\Xy$ and $S$ are functions of $X$, and the classification mode
recovers the training conditional, it follows that
\[
\begin{aligned}
\classifier(\tau(x,s))
&= \Ptr(Y=1\mid X=x') \\
&= \Ptr\bigl(Y=1\mid \Xy=\ey(x), S=s\bigr).
\end{aligned}
\]
Thus, for each $s$, $\classifier(\tau(x,s))$ depends on $x$ only through $\ey(x)$. By assumption, the same is true of $w_s(x)$. Therefore,
\[
\fw(x)
=
\sum_{s\in\Scal}
w_s(x)\,
\Ptr\bigl(Y=1\mid \Xy=\ey(x),S=s\bigr)
=: h(\ey(x))
\]
for some function $h$.

Every $P\in\Pc$ has the same joint distribution of $(Y,\Xy)$:
\[
P(Y,\Xy)
=
\Ptr(Y)\Ptr(\Xy\mid Y).
\]
Hence
\[
R_P(\fw)
=
\mathbb{E}_P\!\left[
\ell\bigl(h(\Xy),Y\bigr)
\right]
\]
is the same for every $P\in\Pc$. Therefore,
\[
R_P(\fw)=R_{P'}(\fw)
\qquad
\text{for all } P,P'\in\Pc,
\]
and thus $\fw\in\Fcinv$.
\end{proof}

\subsection{Optimality of CTTA}\label{app:ctta-opt}

\OptCTTAPredictor*
\begin{proof}
Fix $x \in \Xcal$ and  $s \in \Scal$. Let
$x' := \tau(x,s)$. By the editor assumptions,
$\ey(x')=\ey(x)$ and $\es(x')=s$.

The DAG in Figure~\ref{fig:causal-dag} implies
\[
X \perp Y \mid (\Xy,S).
\]
Since $\Xy$ and $S$ are functions of $X$, and the classification mode
recovers the training conditional, it follows that
\[
\classifier(\tau(x,s))
=
\Ptr(Y=1 \mid X=x')
=
\Ptr\bigl(Y=1 \mid \Xy=\ey(x), S=s\bigr).
\]

Substituting this expression and the definition of $w_s^*$ into
Equation~\eqref{eq:tta} gives
\begin{align*}
f_{\wstar}(x)
&=
\sum_{s\in\Scal}
\Ptr\bigl(S=s\mid \Xy=\ey(x)\bigr)
\Ptr\bigl(Y=1\mid \Xy=\ey(x),S=s\bigr) \\
&=
\Ptr\bigl(Y=1\mid \Xy=\ey(x)\bigr) \\
&=
f^*(x).
\end{align*}
The second equality follows from the law of total probability
conditioned on $\Xy=\ey(x)$, and the final equality follows from the
characterization of $f^*$ in Section~\ref{sec:problem-setup}, applied
to $\Ptr \in \Pc$.
\end{proof}

\subsection{Optimality of Model-Weighted CTTA}
\label{app:mw-ctta-opt}

\ModelWeightedOptimality*
\begin{proof}
Fix $x \in \Xcal$. For each $s \in \Scal$, the editor assumptions imply
that
\[
\ey(\tau(x,s))=\ey(x)
\qquad\text{and}\qquad
\es(\tau(x,s))=s.
\]
By (A1),
\[
\mathcal{M}^l(\tau(x,s))
=
\Ptr(X=\tau(x,s)).
\]
We can factor this probability as
\[
\begin{aligned}
\Ptr(X=\tau(x,s))
&=
\Ptr\bigl(
    X=\tau(x,s)
    \mid \Xy=\ey(x),S=s
\bigr) \\
&\qquad\cdot
\Ptr\bigl(\Xy=\ey(x),S=s\bigr).
\end{aligned}
\]
By (A2), the first factor is the same for every $s \in \Scal$.
Thus, for some constant $c(x)$ that does not depend on $s$,
\[
\mathcal{M}^l(\tau(x,s))
=
c(x)\,
\Ptr\bigl(\Xy=\ey(x),S=s\bigr).
\]
Substituting into the definition of the Model-Weighted CTTA weights gives
\begin{align*}
w_s^{\mathrm{MW}}(x)
&=
\frac{
    c(x)\Ptr\bigl(\Xy=\ey(x),S=s\bigr)
}{
    \sum_{s' \in \Scal}
    c(x)\Ptr\bigl(\Xy=\ey(x),S=s'\bigr)
} \\
&=
\frac{
    \Ptr\bigl(\Xy=\ey(x),S=s\bigr)
}{
    \Ptr\bigl(\Xy=\ey(x)\bigr)
} \\
&=
\Ptr\bigl(S=s\mid\Xy=\ey(x)\bigr) \\
&=
w_s^*(x).
\end{align*}
Therefore,
$\mathbf{w}^{\mathrm{MW}}=\mathbf{w}^*$ at $x$.
Since $x$ was arbitrary, the equality holds for all $x \in \Xcal$.
Proposition~\ref{prop:opt-inv-pred} then gives
\[
f_{\mathbf{w}^{\mathrm{MW}}}(x)
=
f_{\wstar}(x)
=
f^*(x)
\]
for all $x \in \Xcal$.
\end{proof}

\subsection{Single Edit Log-Odds Decomposition}\label{app:s-edit-lo}

\begin{lemma}
\label{lem:edit-logit-decomp}
For all $x \in \Xcal$ and all
$s \in \Scal$,
\[
\logit \classifier\bigl(\tau(x,s)\bigr)
=
\logit f^*(x) + \lambda(s),
\qquad
\lambda(s)
:=
\log
\frac{\Ptr(S=s\mid Y=1)}
     {\Ptr(S=s\mid Y=0)}.
\]
\end{lemma}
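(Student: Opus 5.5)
The plan is to reduce the edited prediction to a conditional probability under $\Ptr$ and then apply Bayes' rule in odds form, using the conditional independence $\Xy \perp S \mid Y$ implied by the DAG in Figure~\ref{fig:causal-dag}.

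First, fix $x$ and $s$, and set $x' := \tau(x,s)$. I would reuse the first step of the proof of Proposition~\ref{prop:opt-inv-pred}. The editor assumptions give $\ey(x')=\ey(x)$ and $\es(x')=s$. Because $X\perp Y\mid(\Xy,S)$ and the classification mode recovers the training conditional, we get
\[
\classifier(\tau(x,s)) = \Ptr\bigl(Y=1\mid \Xy=\ey(x),S=s\bigr).
\]

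Second, I would write the odds of this quantity with Bayes' rule:
\[
\frac{\Ptr(Y=1\mid \Xy=\xy,S=s)}{\Ptr(Y=0\mid \Xy=\xy,S=s)}
=\frac{\Ptr(Y=1)\,\Ptr(\Xy=\xy,S=s\mid Y=1)}{\Ptr(Y=0)\,\Ptr(\Xy=\xy,S=s\mid Y=0)}.
\]
The DAG has $S$ and $\Xy$ connected only through $Y$: the edge $S\leftrightarrow Y$ and the edge $Y\to\Xy$, with $X$ a collider that is not conditioned on. Hence $\Xy\perp S\mid Y$ under $\Ptr$, and each joint conditional factors as $\Ptr(\Xy=\xy\mid Y=y)\,\Ptr(S=s\mid Y=y)$.

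The odds therefore split into two ratios. The first is
\[
\frac{\Ptr(Y=1)\Ptr(\Xy=\xy\mid Y=1)}{\Ptr(Y=0)\Ptr(\Xy=\xy\mid Y=0)},
\]
which equals the odds of $\Ptr(Y=1\mid \Xy=\xy)$. The second is $\Ptr(S=s\mid Y=1)/\Ptr(S=s\mid Y=0)$. Taking logs gives $\logit \Ptr(Y=1\mid\Xy=\ey(x)) + \lambda(s)$.

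Finally, I would identify $\Ptr(Y=1\mid \Xy=\ey(x))$ with $f^*(x)$ using the characterization in Section~\ref{sec:problem-setup}, which applies because $\Ptr\in\Pc$.

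The positivity conditions keep every ratio and logarithm well defined: $\Ptr(Y=y)>0$, $\Ptr(S=s\mid Y=y)>0$, and $\Ptr(\Xy=\xy\mid Y=y)>0$. Since edits lie in the training support, the conditioning events also have positive probability.

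The only subtle step is justifying $\Xy\perp S\mid Y$. The dashed bidirected edge signals possible latent confounding between $S$ and $Y$. I would argue from the factorization of $\Ptr\in\Pc$, namely $\Ptr(Y)P(S\mid Y)\Ptr(\Xy\mid Y)\Ptr(X\mid\Xy,S)$. Marginalizing out $X$ leaves $\Ptr(Y)\Ptr(S\mid Y)\Ptr(\Xy\mid Y)$, which gives the conditional independence directly, without needing d-separation arguments.
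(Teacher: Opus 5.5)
Your proposal is correct and follows the paper's proof essentially step for step. You reduce $\classifier(\tau(x,s))$ to $\Ptr(Y=1\mid \Xy=\ey(x),S=s)$ as in Proposition~\ref{prop:ctta-si}, factor the odds with Bayes' rule and $\Xy\perp S\mid Y$, and identify the first factor with the odds of $f^*(x)$. Your extra remarks on positivity and on reading the conditional independence off the factorization defining $\Pc$ are sound refinements that the paper leaves implicit.
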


\begin{proof}
Fix $x \in \Xcal$ and $s \in \Scal$.  As in the proof of
Proposition~\ref{prop:ctta-si}, the editor assumptions, the DAG in Figure~\ref{fig:causal-dag}, and the assumed training-optimality of the classifier imply,
\[
\classifier\bigl(\tau(x,s)\bigr)
=
\Ptr\bigl(Y=1\mid \Xy=\ey(x),S=s\bigr).
\]
By Bayes' rule and the conditional independence
$\Xy \perp S \mid Y$ implied by the DAG in
Figure~\ref{fig:causal-dag},
\[
\frac{
    \Ptr\bigl(Y=1\mid \Xy=\ey(x),S=s\bigr)
}{
    \Ptr\bigl(Y=0\mid \Xy=\ey(x),S=s\bigr)
}
=
\frac{
    \Ptr\bigl(Y=1\mid \Xy=\ey(x)\bigr)
}{
    \Ptr\bigl(Y=0\mid \Xy=\ey(x)\bigr)
}
\frac{
    \Ptr(S=s\mid Y=1)
}{
    \Ptr(S=s\mid Y=0)
}.
\]
Taking logs and using
\[
f^*(x)
=
\Ptr\bigl(Y=1\mid \Xy=\ey(x)\bigr)
\]
gives
\[
\logit \classifier\bigl(\tau(x,s)\bigr)
=
\logit f^*(x) + \lambda(s).
\]
\end{proof}

\subsection{Rank Optimality of Fixed-Weight CTTA}
\label{app:fw-ctta-rank-opt}

\FixedWeightAUROC*

\begin{proof}
By Lemma~\ref{lem:edit-logit-decomp}, for any $x\in\Xcal$ and
$s\in\Scal$,
\[
\classifier\bigl(\tau(x,s)\bigr)
=
\sigma\bigl(\logit f^*(x)+\lambda(s)\bigr).
\]
Thus, for fixed weights $\mathbf{w}=\{w_s\}_{s\in\Scal}$,
\[
\fw(x)
=
\sum_{s\in\Scal}
w_s\,
\sigma\bigl(\logit f^*(x)+\lambda(s)\bigr).
\]

Define
\[
g(z)
:=
\sum_{s\in\Scal}
w_s\,\sigma\bigl(z+\lambda(s)\bigr).
\]
Since $\sigma'(z)>0$ for all $z$, and the weights are nonnegative and
sum to one,
\[
g'(z)
=
\sum_{s\in\Scal}
w_s\,\sigma'\bigl(z+\lambda(s)\bigr)
>0.
\]
Thus, $g$ is strictly increasing. Therefore,
\[
\fw(x)>\fw(x')
\quad\Longleftrightarrow\quad
\logit f^*(x)>\logit f^*(x').
\]
Since $\logit$ is also strictly increasing,
\[
\fw(x)>\fw(x')
\quad\Longleftrightarrow\quad
f^*(x)>f^*(x').
\]

Thus, $\fw$ and $f^*$ induce the same ranking of examples. Since AUROC depends only on this ranking, they achieve the same
AUROC under every $P\in\Pc$.
\end{proof}

\section{Select-CTTA Details}\label{app:select-ctta}

This appendix gives the full definitions of the three Select-CTTA tests from Section~\ref{subsec:select-ctta-def}. We first describe the inputs we use to perform the tests, which are computed from the unlabeled test batch and trained LLM. We then describe each test in turn. Each test checks whether a different condition holds. If any of the three tests fail, Select-CTTA chooses uniform weights. Otherwise, it chooses Model-Weighted CTTA.

\paragraph{Inputs.}
For each example $x_i$ in the unlabeled test batch and each shortcut value $s\in\Scal$, we query the trained LLM on the counterfactual edit $\tau(x_i,s)$ and record two numbers: the classifier's log-odds on the edit, $\ell_i(s) := \operatorname{logit}\classifier(\tau(x_i,s))$, and the Model-Weighted CTTA weight on the edit, $w_i(s) := w_s^{\mathrm{MW}}(x_i)$. Every test is computed from these two $N\times|\Scal|$ matrices.

The mean shift between two shortcut values across examples in the test batch is $\Delta_{s,s'} := \frac{1}{N}\sum_i \big(\ell_i(s)-\ell_i(s')\big)$. We define the \textit{principal pair} $(s^{+},s^{-})$ as the pair with the largest absolute mean shift.  We order $(s^{+},s^{-})$ so that $\Delta_{s^{+},s^{-}}>0$; that is, $s^{+}$ is the value the classifier associates with class~1. These are the two edits the classifier disagrees about most, so they are where a wrong weight costs the most. Tests~2 and~3 examine the \textit{relative weight} on $s^{+}$ within this pair,
\[
r_i := \frac{w_i(s^{+})}{w_i(s^{+})+w_i(s^{-})}.
\]

\paragraph{Test 1: Shortcut strength.}
Fail if $\Delta_{s^{+},s^{-}}<\delta_{\mathrm{str}}$. If even the most different pair of edits barely changes the prediction, all weightings give nearly the same predictor and there is nothing to gain from estimating weights.

\paragraph{Test 2: Weight collapse.}
Fail if the weights typically concentrate on one edit. We check this in two places:
\begin{itemize}[leftmargin=*,nosep]
\item \textit{Within the principal pair:} fail if $\operatorname{median}_i\, r_i$ lies outside $[\,1-\delta_{\mathrm{col}},\,\delta_{\mathrm{col}}\,]$.
\item \textit{Across all values:} fail if $\max_{s}\,\operatorname{median}_i\, w_i(s) > \delta_{\mathrm{col}}$.
\end{itemize}
With two shortcut values these are the same check. We use medians so that a few extreme examples do not trigger the test: the optimal weights may legitimately be concentrated for some examples, and we only want to flag systematic concentration.

\paragraph{Test 3: Likelihood-classifier alignment.}
Let $c_i := \frac{1}{|\Scal|}\sum_s \ell_i(s)$ be the mean log-odds across edits for example $x_i$. By Equation~\ref{eq:edit-logit-decomp}, $c_i$ ranks examples in the same order as $f^*$. This test examines the Spearman correlation $\rho$ between $r_i$ and $c_i$ and fails if it is below $\delta_{\mathrm{align}}$. A positive $\rho$ means the likelihood puts more weight on $s^{+}$ for examples whose content looks more like class~1, as the optimal weights do. If $r_i$ is constant, $\rho$ is undefined and we set it to $0$, so the test fails. 

\paragraph{Decision.}
We use Model-Weighted CTTA if every test passes, with equality counting as a pass. Otherwise, we use Uniform CTTA. Table~\ref{tab:select-ctta-tests} summarizes the tests. We use $\delta_{\mathrm{str}}=0.5$, $\delta_{\mathrm{col}}=0.8$, and $\delta_{\mathrm{align}}=0.5$. These values were fixed before the tests were run on any experimental setting and were not tuned afterwards. Select-CTTA makes one decision per trained LLM and test batch and applies the chosen weights to every example in the batch.

\paragraph{Choosing the thresholds.}
In our assumed problem setting, falling back to Uniform CTTA when the model weights are accurate costs only calibration, since Uniform CTTA keeps the optimal ranking (Proposition~\ref{prop:fixed-weight-auroc}). In contrast, accepting inaccurate weights can also hurt the ranking. Therefore, we recommend setting the thresholds conservatively. Each threshold has a natural scale:
\begin{itemize}[leftmargin=*,nosep]
\item $\delta_{\mathrm{str}}$ is in log-odds. Every CTTA predictor is a convex combination of the edit predictions $\classifier(\tau(x,s))$, so its log-odds lie between the smallest and largest edit log-odds. When the decomposition in Equation~\ref{eq:edit-logit-decomp} holds, the shifts $d_i(s,s')$ do not vary across examples. Therefore, the predictions of all CTTA predictors on an example lie within $\Delta$ log-odds of each other. In particular, Uniform CTTA deviates from the optimal weights by at most $\Delta$ log-odds at every example, while keeping the optimal ranking (Proposition~\ref{prop:fixed-weight-auroc}). Choose $\delta_{\mathrm{str}}$ as the largest deviation in log-odds one is willing to accept in exchange for skipping weight estimation.
\item $\delta_{\mathrm{col}}$ is the largest share of the weight the typical example may put on a single edit. Collapsed weights put nearly all of it on one edit, so the threshold should be set between $0.5$ and $1$, with lower values making the test stricter. Knowledge of the label proportions in the test data can inform the choice: the more imbalanced the labels, the more concentrated the optimal weights, and the higher the threshold should be.
\item $\delta_{\mathrm{align}}$ is a rank correlation. $0$ means the weights do not correlate with the label-relevant content. $1$ means they order examples as the optimal weights do. The threshold sets how much of this agreement to require, with higher values making the test stricter.
\end{itemize}

\begin{table}[h]
\centering
\footnotesize
\caption{The Select-CTTA tests. Model-Weighted CTTA is used only if every test passes.}
\label{tab:select-ctta-tests}
\begin{tabular}{lll}
\toprule
Test & Fall back to Uniform CTTA if & Threshold \\
\midrule
1. Shortcut strength & $\Delta_{s^{+},s^{-}} < \delta_{\mathrm{str}}$ & $0.5$ \\
2. Weight collapse & $\operatorname{median}_i r_i \notin [1-\delta_{\mathrm{col}},\,\delta_{\mathrm{col}}]$ \ or \ $\max_s \operatorname{median}_i w_i(s) > \delta_{\mathrm{col}}$ & $0.8$ \\
3. Alignment & $\operatorname{Spearman}(r_i, c_i) < \delta_{\mathrm{align}}$ & $0.5$ \\
\bottomrule
\end{tabular}
\end{table}

\section{Dataset Details}\label{app:data}

\subsection{Math Grading (Sycophancy)}

We use math problems from the MATH dataset \citep{hendrycksmath2021}, with labels from phase 2 of the PRM800K project \citep{lightman2023lets}. PRM800K contains model-generated step-by-step solutions to MATH problems, each with human-provided correctness labels of its steps. To go from step-by-step correctness labels to a single correctness label, we use the following criteria. We label a solution as \textit{correct} if the human labelers finished it without rating any step as wrong and its final answer matches the MATH reference answer after string normalization. We label a solution as \textit{incorrect} if any of the steps are labeled as incorrect and the final answer does not match the MATH reference answer after string normalization. We drop solutions that do not fit into either of these categories (e.g., a wrong step but a matching answer). When a solution has labelings from multiple raters, we merge them by majority. Due to compute restrictions, we enforce a length ceiling on examples. We consider only examples whose combined problem-solution text is at most 768 tokens using the Qwen2.5 tokenizer. We drop problems that don't have at least one correct and one incorrect solution that fall under this length ceiling. From the 5,176 eligible problems, we draw 3,200 at random: 1,750 for training, 150 for validation, and 1,300 for testing. For each problem, we randomly sample one correct and one incorrect solution to include as examples. Class 1 is \textit{correct}.

The model's input is the problem and the solution, formatted as \texttt{Problem: \ldots} followed by \texttt{Solution:} and the solution steps. The shortcut is a sentence appended after a blank line. It has three values: \textit{I think this solution is correct}, \textit{I think this solution is incorrect}, and \textit{I'm unsure whether this solution is correct}. We control the shortcut-label correlation during training and testing by appending shortcut sentences to examples in a label-dependent way (see Appendix~\ref{app:exps-train} and Appendix~\ref{app:exps-test}).

\subsection{Occupation Classification (Gender)}

We use biographies from the Bias in Bios \citep{de2019bias} dataset. We use those that are labeled attorney or paralegal, with the dataset's gender annotation as the shortcut value. We remove exact duplicate texts. Biographies that contain the words \textit{attorney}, \textit{lawyer}, or \textit{paralegal} make the task trivial, so when we include these examples, we first apply an LLM edit that removes these words (see Appendix~\ref{app:exps-editor} for our text editing procedure). Class 1 is \textit{paralegal}.

We use 1,000 examples per class for training and 100 examples per class for validation. These examples are selected from the original Bias in Bios training pool. To control the shortcut strength during training (see Appendix~\ref{app:exps-train}), we select examples to satisfy a specified gender-occupation relationship $\Ptr(S \mid Y)$. Paralegals are rare and mostly women in the source data: the training split has 173 paralegal biographies about men and 973 about women, compared to 13,064 attorney biographies about men and 8,105 attorney biographies about women. Therefore, sub-sampling cannot produce training sets with a chosen gender-occupation correlation of reasonable size for some shortcut strength settings. To reach the 1,100 examples needed for each class, we instead create additional biographies of each gender using counterfactual gender edits (see Appendix~\ref{app:exps-editor}).

We use a test set that has 291 biographies per class. To create desired shortcut distributions $\Pte(S \mid Y)$, we apply counterfactual gender edits. The 291 paralegal examples represent every paralegal biography from the Bias in Bios dev and test splits that does not contain an explicit profession keyword and that our editor could render as both genders and as neutral (gender references removed). This paralegal set includes 48 biographies about men and 243 biographies about women, before editing. For the attorney test examples, we choose another 291 biographies, with the original gender counts swapped compared to the paralegals (243 about men and 48 about women before editing). For details on how we choose which examples to edit see Appendix~\ref{app:exps-test}.
\section{Experimental Details}\label{app:exps}

\subsection{Training Distributions}\label{app:exps-train}

In our first experiments, we fine-tune models on data with a controlled shortcut strength given by  $p \in [0.5, 1]$. For a designated shortcut value $s$, we set $\Ptr(S = s \mid Y = 0) = p$ and  $\Ptr(S = s \mid Y = 1) = 1 - p$. For each label setting, we split the remaining probability mass evenly among the remaining shortcut values. For the math task, the designated shortcut value is the \textit{incorrect} opinion, and for the occupation task, it is \textit{man}. At $p = 0.95$, for the math task, an incorrect solution includes the correct, incorrect, and unsure opinions with probabilities $(0.025, 0.95, 0.025)$, and a correct solution includes the opinions with probabilities $(0.475, 0.05, 0.475)$. For the occupation task, $\Ptr(\text{man} \mid \text{attorney}) = 0.95$ and $\Ptr(\text{man} \mid \text{paralegal}) = 0.05$. The balanced training sets used for the reference models include every shortcut value with the same probability under both labels.

To create training datasets with the desired shortcut-label relationships, we first compute the number of examples in each (label, shortcut value) \textit{cell}. This is the class size times $\Ptr(S \mid Y)$, rounded to integers. To fill these cells for the math task, we randomly sample examples from each label to assign to each shortcut value, with proportions corresponding to $\Ptr(S \mid Y)$. To fill the cells for the occupation task, we first fill with unedited biographies and then supplement with edited biographies, while ensuring that the number of edited biographies is the same for the paralegal and attorney classes.

\subsection{Test Distributions}\label{app:exps-test}

We evaluate methods on three different test distributions: the \textit{matched} split follows the training $\Ptr(S \mid Y)$; the \textit{reversed} split swaps the two labels' settings of this distribution, so $\Pte(S \mid Y = 1) = \Ptr(S \mid Y = 0)$; and the \textit{balanced} split uses $p = 0.5$, under which the shortcut is independent of the label while its marginal is the same as in the other two splits. Table~\ref{tab:test-cells} gives the cell counts at $p = 0.95$. Analogous to the training distributions, for the test distributions, the number of examples in each (label, shortcut value) cell is the class size times $\Pte(S \mid Y)$, rounded to integers. To fill the cells, we follow the same procedure as we do to create training datasets, using edited examples for the occupation task to fill in cells that do not have a sufficient number of original examples, and keeping the number of edits balanced across the two classes. We use fixed test pools for both tasks, so the underlying test examples are the same across the three splits.

\begin{table}[h]
\centering
\caption{Test split cell counts at $p = 0.95$. Each split shows the same 2,600 solutions (1,300 per class) or 582 biographies (291 per class).}
\label{tab:test-cells}
\small
\begin{tabular}{llrrrrrr}
\toprule
& & \multicolumn{3}{c}{Math: $Y = $ incorrect} & \multicolumn{3}{c}{Math: $Y = $ correct} \\
\cmidrule(lr){3-5}\cmidrule(lr){6-8}
Split & & correct & incorrect & unsure & correct & incorrect & unsure \\
\midrule
Matched & & 33 & 1{,}235 & 32 & 618 & 65 & 617 \\
Reversed & & 618 & 65 & 617 & 33 & 1{,}235 & 32 \\
Balanced & & 325 & 650 & 325 & 325 & 650 & 325 \\
\midrule
& & \multicolumn{3}{c}{Occupation: $Y = $ attorney} & \multicolumn{3}{c}{Occupation: $Y = $ paralegal} \\
\cmidrule(lr){3-5}\cmidrule(lr){6-8}
Split & & man & woman & & man & woman & \\
\midrule
Matched & & 276 & 15 & & 15 & 276 & \\
Reversed & & 15 & 276 & & 276 & 15 & \\
Balanced & & 146 & 145 & & 146 & 145 & \\
\bottomrule
\end{tabular}
\end{table}

\subsection{Models}\label{app:exps-models}

The six models are Qwen2.5-1.5B-Instruct, Qwen2.5-7B-Instruct, gemma-3-4b-it, gemma-3-12b-it, Mistral-7B-Instruct-v0.3, and Mistral-Nemo-Instruct-2407, all from the Hugging Face Hub. Each model is prompted as a chat: a system turn with the task instruction, a user turn with the input text, and an assistant turn that answers with a single letter. The instructions are:

\begin{quote}\small
\textit{Math.} You are a math grading assistant. You will be shown a problem and a proposed step-by-step solution. Decide whether the solution is correct: the reasoning and the final answer must both be right. Respond with `A' if the solution is correct and `B' if it is incorrect.

\textit{Occupation.} You are an occupation classification assistant. Given a short professional biography, respond with `A' if the person is an attorney and `B' if the person is a paralegal.
\end{quote}

The letters are single tokens under every tokenizer used. In classification mode, $\classifier(x)$ is the softmax over the logits of the two letters at the position of the assistant's answer. In likelihood mode, $\lmodel(x)$ is the product of the model's token probabilities over the input text within the chat; the tokens shared by all edits of an example (the instruction and the chat formatting) are not scored. Gemma's chat template has no system turn, so the instruction is placed at the start of the user turn.

\subsection{Model Training Settings}\label{app:exps-training}

\textbf{Fine-tuning.} All-tokens fine-tuning minimizes the cross-entropy of every token of the chat, including the instruction and the input text, so the fine-tuned model is also a language model of the training text. Completion-only fine-tuning minimizes the cross-entropy of the assistant turn only. Both train all model weights for six epochs with learning rate $2 \times 10^{-5}$ on a cosine schedule, 32 examples per optimizer step, bfloat16 precision, and 8-bit AdamW, on one 96\,GB GH200 GPU per run with a single training seed. Chats are at most 512 tokens (occupation) or 1,024 tokens (math); nothing is truncated. The model is evaluated on the validation set after every epoch and one epoch is kept. All-tokens runs keep the epoch with the lowest validation all-tokens loss, so that the kept model is the best language model of the training text. Completion-only runs keep the epoch with the lowest validation cross-entropy of the answer letter.

\textbf{Zero-shot.} The untuned instruction-tuned models are prompted as above. Because an untuned model may prefer one answer letter, every rendering is scored twice, once with each assignment of the letters to the classes, and the two class probabilities are averaged. The two text log-likelihoods are averaged as well.

\subsection{Baselines}\label{app:exps-baselines}

Both calibration baselines correct the classifier's predicted class probabilities by a term that does not depend on the label. They differ in how they estimate that term. 

\textbf{Batch Calibration (BC)} \citep{zhou2024batch} estimates the model's prior over classes from the unlabeled test batch. For each class, it averages the model's log-probability of that class over the batch and subtracts this average from the log-probability of every example. We renormalize the corrected log-probabilities to obtain class probabilities. The shortcut-conditioned variant estimates one prior per shortcut value, averaging only over the examples observed with that value, and corrects each example with the prior of its own value.

\textbf{Contextual Calibration (CC)} \citep{zhao2021calibrate} estimates the prior from inputs that carry no information about the label. We use their three content-free inputs, \texttt{N/A}, \texttt{[MASK]}, and the empty string, and place each where the biography or problem would appear in the prompt. We average the model's class probabilities over the three inputs. For each test example, we divide its predicted class probabilities by these averages and renormalize, which makes the content-free inputs receive a uniform prediction.

These three inputs carry no shortcut value, so the shortcut-conditioned variant needs a content-free input for each value. We write these for each task. For occupation, we write ten short biographies that say nothing about the person's work and render each as a man and as a woman. We average the class probabilities over the ten biographies of a gender. For math, we take the test problem, replace the solution with the line \texttt{[solution omitted]}, and append the opinion sentence. This input depends on the problem, so we estimate one set of averages per problem and opinion. The conditioned variant divides each example's predicted probabilities by the averages for the shortcut value it was observed with, and renormalizes.

\textbf{Out-of-Context prompting (OOC)} follows Algorithm~1 and the prompts of \citet{cotta2024test}. This method uses the same model to generate edits and predictions. In each of three rounds, it first rewrites the input to hide the shortcut, then rewrites again to insert a shortcut value drawn uniformly at random, and finally predicts on the result. Both rewriting prompts include a stratification variable that is provided as ``secret information'' to the model. As in the original paper, for this, we use the model's own prediction of the label on the original text. Following the paper, we sample rewrites at temperature $0.8$. We report the mean class probability over the three rounds rather than the original majority vote so that CE and ECE are defined.

The balanced-training reference is the same base model fine-tuned with the same recipe on the balanced training set of Appendix~\ref{app:exps-train}, predicting on the observed (unaugmented) text.

\subsection{Counterfactual Editor}\label{app:exps-editor}
\textbf{Edit Uses.} In this work, we generate edits for two different purposes. First, we generate edits to create train and test sets with specific shortcut strength settings, as described in Appendix~\ref{app:exps-train} and Appendix~\ref{app:exps-test}. Second, we generate edits at test time as part of our CTTA methods. In both cases, we use the same basic editing pipeline, which is described below. The only two differences are as follows. First, the edits used to create the train and test sets are applied to original examples, whereas the edits used for the CTTA methods are applied to the test examples used in each experiment. This means that if a test example was generated through editing, we edit this example again, so that the CTTA method is applied to the example observed at test time. Second, the edits used to create train and test sets include edits that \textit{scrub} the text to remove explicit references to the label, whereas the edits used as part of CTTA only involve editing the value of the shortcut variable.

\textbf{Edit Pipeline.} For math, an edit replaces or removes the opinion sentence, so edits are exact. For occupation, we use three kinds of edits, all produced by Claude Opus: \textit{flip}, which changes every signal of the subject's gender (pronouns, gendered nouns, honorifics, first names, and names inside e-mail addresses or handles) to the target gender; \textit{neutralize}, which removes every signal of the subject's gender (we replace gendered pronouns with \textit{they} and reduce first names to an initial); and \textit{scrub}, which rephrases the words \textit{attorney}, \textit{lawyer}, and \textit{paralegal} without introducing a substitute title. The rules require the editor to change nothing else, including awkward phrasing, and allow it to declare an edit infeasible. Claude Sonnet then judges each rewrite, seeing only the input text, the rewrite, and the target. A rewrite that fails is corrected once by Claude Opus and judged again. Each rewrite also passes five automatic checks: the text changed, no profession word remains, the detected gender markers agree with the edit, the word count is within $0.75$ to $1.30$ times the original, and no banned substitute title was introduced. Rewrites that fail are discarded.

\subsection{Metrics and Confidence Intervals}\label{app:exps-eval}

All metrics are computed from the predicted probability of class 1: accuracy at threshold $0.5$, AUROC, cross-entropy, and expected calibration error with 15 equal-width bins. Confidence intervals are 95\% percentile bootstrap intervals over 2,000 resamples of the test set. For occupation a resample draws biographies, and for math it draws problems and keeps both solutions of each. In the tables, each entry is the metric averaged over the six models. The interval is computed by averaging the metric over the models within each resample, and comparisons between methods are paired, using the same resamples for both. Select CTTA makes one decision per fine-tuned model and test split.

\subsection{Assets and licenses.}\label{app:assets} PRM800K \citep{lightman2023lets}, including the MATH problems it contains \citep{hendrycksmath2021}, is released under the MIT license. We use the Bias in Bios data \citep{de2019bias} as distributed on the Hugging Face Hub (\texttt{LabHC/bias\_in\_bios}), which is listed under the MIT license. The six classifier models are distributed under the Apache 2.0 license (Qwen2.5 and Mistral) or the Gemma Terms of Use (Gemma 3).
\section{Additional Results}\label{app:results}

\subsection{Results Across Test Distributions}\label{app:test-distr}

Tables~\ref{tab:app-splits-ft} and~\ref{tab:app-splits-zs} include the fine-tuned and zero-shot results on the three test splits described in Appendix~\ref{app:exps-test}: \textit{reversed}, where the shortcut-label association is the opposite of training; \textit{balanced}, where the shortcut is independent of the label; and \textit{matched}, where it agrees with training.

\begin{table}[ht]
\centering
\caption{Fine-tuned models ($p = 0.95$) on the reversed (Rev.), balanced (Bal.), and matched (Match.) test splits. Mean over the six models; one block per training loss. Bold: best in the column. Underlined: not significantly different from the best (95\% paired bootstrap interval of the difference includes zero). The math CTTA rows are identical across splits because the math edits are templated edits on the same solutions. The occupation CTTA rows differ slightly because the edits involve using a language model to edit the observed gender in the text, which differs across test splits. On the matched split, the shortcut is useful, and the methods that do not attempt shortcut removal perform best. On the other splits, Select-CTTA is consistently among the top-performing methods and achieves a strong balance between AUROC and ECE.}
\label{tab:app-splits-ft}
\small\setlength{\tabcolsep}{3pt}
\resizebox{\textwidth}{!}{\begin{tabular}{lcccccccccccc}
\toprule
 & \multicolumn{6}{c}{Math (Sycophancy)} & \multicolumn{6}{c}{Occupation (Gender)} \\
\cmidrule(lr){2-7}\cmidrule(lr){8-13}
 & \multicolumn{3}{c}{AUROC $\uparrow$} & \multicolumn{3}{c}{ECE $\downarrow$} & \multicolumn{3}{c}{AUROC $\uparrow$} & \multicolumn{3}{c}{ECE $\downarrow$} \\
\cmidrule(lr){2-4}\cmidrule(lr){5-7}\cmidrule(lr){8-10}\cmidrule(lr){11-13}
Method & Rev. & Bal. & Match. & Rev. & Bal. & Match. & Rev. & Bal. & Match. & Rev. & Bal. & Match. \\
\midrule
\multicolumn{13}{@{}l}{\textit{All-tokens loss}} \\
Standard (No Aug) & 0.062 & 0.563 & \textbf{0.961} & 0.894 & 0.452 & \textbf{0.011} & 0.271 & 0.747 & \textbf{0.988} & 0.572 & 0.292 & 0.038 \\
Batch calibration & 0.062 & 0.563 & \textbf{0.961} & 0.895 & 0.452 & \textbf{0.011} & 0.271 & 0.747 & \textbf{0.988} & 0.581 & 0.300 & \textbf{0.026} \\
Batch calibration (shortcut-cond.) & 0.436 & 0.645 & 0.550 & 0.201 & \textbf{0.060} & 0.087 & 0.550 & 0.890 & 0.652 & 0.262 & 0.077 & 0.126 \\
Contextual calibration & 0.062 & 0.563 & \textbf{0.961} & 0.860 & 0.424 & 0.048 & 0.271 & 0.747 & \textbf{0.988} & 0.577 & 0.297 & 0.047 \\
Contextual calibration (shortcut-cond.) & 0.111 & 0.568 & 0.956 & 0.543 & 0.161 & 0.274 & \textbf{0.933} & 0.886 & 0.804 & 0.160 & 0.162 & 0.192 \\
\cmidrule(lr){1-13}
Uniform CTTA & 0.700 & 0.700 & 0.700 & 0.153 & 0.153 & 0.153 & 0.913 & \underline{0.913} & 0.912 & 0.176 & 0.177 & 0.176 \\
Neutral CTTA (removal) & 0.659 & 0.659 & 0.659 & 0.312 & 0.312 & 0.312 & 0.919 & \underline{0.920} & 0.922 & 0.079 & 0.081 & 0.081 \\
Neutral CTTA (unsure) & 0.700 & 0.700 & 0.700 & 0.447 & 0.447 & 0.447 & -- & -- & -- & -- & -- & -- \\
Model-Weighted CTTA & \textbf{0.745} & \textbf{0.745} & 0.745 & \textbf{0.141} & 0.141 & 0.141 & \underline{0.930} & \textbf{0.924} & 0.921 & \textbf{0.051} & \textbf{0.054} & 0.055 \\
Select CTTA & 0.725 & 0.725 & 0.725 & 0.152 & 0.152 & 0.152 & \underline{0.930} & \textbf{0.924} & 0.921 & \textbf{0.051} & \textbf{0.054} & 0.055 \\
\midrule
\textit{Balanced Training (Reference)} & \textit{0.688} & \textit{0.703} & \textit{0.707} & \textit{0.132} & \textit{0.139} & \textit{0.147} & \textit{0.938} & \textit{0.949} & \textit{0.958} & \textit{0.067} & \textit{0.069} & \textit{0.072} \\
\midrule
\multicolumn{13}{@{}l}{\textit{Completion-only loss}} \\
Standard (No Aug) & 0.077 & 0.583 & \textbf{0.964} & 0.893 & 0.454 & \underline{0.016} & 0.378 & 0.793 & \textbf{0.992} & 0.506 & 0.268 & \textbf{0.030} \\
Batch calibration & 0.077 & 0.583 & \textbf{0.964} & 0.893 & 0.454 & \textbf{0.015} & 0.378 & 0.793 & \textbf{0.992} & 0.506 & 0.266 & \underline{0.036} \\
Batch calibration (shortcut-cond.) & 0.430 & 0.657 & 0.437 & 0.201 & \textbf{0.051} & 0.162 & 0.613 & 0.927 & 0.682 & 0.325 & \textbf{0.084} & 0.237 \\
Contextual calibration & 0.077 & 0.583 & \textbf{0.964} & 0.820 & 0.385 & 0.094 & 0.378 & 0.793 & \textbf{0.992} & 0.485 & 0.262 & 0.043 \\
Contextual calibration (shortcut-cond.) & 0.430 & 0.643 & 0.820 & 0.213 & 0.091 & 0.207 & 0.890 & 0.922 & 0.938 & 0.204 & 0.163 & 0.152 \\
\cmidrule(lr){1-13}
Uniform CTTA & \textbf{0.676} & \textbf{0.676} & 0.676 & \textbf{0.155} & 0.155 & 0.155 & \textbf{0.945} & \textbf{0.945} & 0.945 & 0.151 & 0.152 & 0.151 \\
Neutral CTTA (removal) & 0.662 & 0.662 & 0.662 & 0.369 & 0.369 & 0.369 & \underline{0.942} & \underline{0.943} & 0.944 & \textbf{0.117} & 0.113 & 0.111 \\
Neutral CTTA (unsure) & \underline{0.673} & \underline{0.673} & 0.673 & 0.451 & 0.451 & 0.451 & -- & -- & -- & -- & -- & -- \\
Model-Weighted CTTA & 0.654 & 0.654 & 0.654 & 0.315 & 0.315 & 0.315 & 0.814 & 0.810 & 0.812 & 0.229 & 0.231 & 0.231 \\
Select CTTA & \textbf{0.676} & \textbf{0.676} & 0.676 & \textbf{0.155} & 0.155 & 0.155 & \textbf{0.945} & \textbf{0.945} & 0.945 & 0.151 & 0.152 & 0.151 \\
\midrule
\textit{Balanced Training (Reference)} & \textit{0.791} & \textit{0.788} & \textit{0.786} & \textit{0.066} & \textit{0.064} & \textit{0.064} & \textit{0.945} & \textit{0.950} & \textit{0.956} & \textit{0.078} & \textit{0.076} & \textit{0.070} \\
\bottomrule
\end{tabular}
}
\end{table}

\begin{table}[ht]
\centering
\caption{Zero-shot models on the three test splits. OOC was run on the reversed and balanced splits only. Bold: best in the column. Underlined: not significantly different from the best (95\% paired bootstrap interval of the difference includes zero). The math CTTA rows are identical across splits because the math edits are templated edits on the same solutions. The occupation CTTA rows differ slightly because the edits involve using a language model to edit the observed gender in the text, which differs across test splits.  On the matched split, the shortcut is useful, and the methods that do not attempt shortcut removal perform best. On the other splits, Select-CTTA achieves the best balance of AUROC and ECE.}
\label{tab:app-splits-zs}
\small\setlength{\tabcolsep}{3pt}
\resizebox{\textwidth}{!}{\begin{tabular}{lcccccccccccc}
\toprule
 & \multicolumn{6}{c}{Math (Sycophancy)} & \multicolumn{6}{c}{Occupation (Gender)} \\
\cmidrule(lr){2-7}\cmidrule(lr){8-13}
 & \multicolumn{3}{c}{AUROC $\uparrow$} & \multicolumn{3}{c}{ECE $\downarrow$} & \multicolumn{3}{c}{AUROC $\uparrow$} & \multicolumn{3}{c}{ECE $\downarrow$} \\
\cmidrule(lr){2-4}\cmidrule(lr){5-7}\cmidrule(lr){8-10}\cmidrule(lr){11-13}
Method & Rev. & Bal. & Match. & Rev. & Bal. & Match. & Rev. & Bal. & Match. & Rev. & Bal. & Match. \\
\midrule
Standard (No Aug) & 0.113 & 0.549 & \textbf{0.948} & 0.704 & 0.390 & 0.166 & 0.762 & \underline{0.789} & \textbf{0.814} & 0.235 & 0.219 & 0.208 \\
BC & 0.113 & 0.549 & \textbf{0.948} & 0.779 & 0.393 & \textbf{0.087} & 0.762 & \underline{0.789} & \textbf{0.814} & \underline{0.225} & \textbf{0.204} & \textbf{0.186} \\
BC (shortcut-cond.) & 0.489 & 0.576 & 0.543 & 0.323 & 0.214 & 0.237 & 0.564 & \underline{0.791} & 0.568 & 0.315 & \underline{0.206} & 0.310 \\
CC & 0.113 & 0.549 & \textbf{0.948} & 0.712 & 0.390 & 0.156 & 0.762 & \underline{0.789} & \textbf{0.814} & 0.248 & 0.230 & 0.215 \\
CC (shortcut-cond.) & 0.187 & 0.556 & 0.904 & 0.596 & 0.360 & 0.217 & \textbf{0.806} & \underline{0.789} & 0.765 & 0.293 & 0.285 & 0.296 \\
OOC & 0.552 & 0.582 & -- & 0.242 & 0.222 & -- & 0.777 & 0.781 & -- & \textbf{0.214} & \underline{0.210} & -- \\
\cmidrule(lr){1-13}
Uniform CTTA & 0.609 & 0.609 & 0.609 & \textbf{0.152} & \textbf{0.152} & 0.152 & 0.792 & \textbf{0.792} & 0.792 & \underline{0.215} & \underline{0.215} & 0.215 \\
Neutral CTTA (removal) & \underline{0.619} & \underline{0.619} & 0.619 & 0.363 & 0.363 & 0.363 & 0.785 & 0.785 & 0.786 & \underline{0.218} & \underline{0.218} & 0.217 \\
Neutral CTTA (unsure) & 0.606 & 0.606 & 0.606 & 0.307 & 0.307 & 0.307 & -- & -- & -- & -- & -- & -- \\
Model-Weighted CTTA & \textbf{0.621} & \textbf{0.621} & 0.621 & 0.364 & 0.364 & 0.364 & 0.793 & \textbf{0.792} & 0.792 & \underline{0.222} & 0.223 & 0.224 \\
Select CTTA  & 0.609 & 0.609 & 0.609 & \textbf{0.152} & \textbf{0.152} & 0.152 & 0.792 & \textbf{0.792} & 0.792 & \underline{0.215} & \underline{0.215} & 0.215 \\
\bottomrule
\end{tabular}
}
\end{table}

\subsection{Shortcut Strength}\label{app:shortcut-strength}

Table~\ref{tab:app-strength} reports results for models fine-tuned at shortcut strengths $p \in \{0.50, 0.75, 0.95\}$, each evaluated on the reversed split of its own setting. At $p = 0.50$ the reversed and matched splits coincide. The balanced fine-tuning reference is the same balanced fine-tune at every $p$.

\begin{table}[ht]
\centering
\caption{Shortcut strength sweep on the reversed split. The mean over the six models is shown. Bold: best in the column. Underlined: not significantly different from the best (95\% paired bootstrap interval of the difference includes zero). At $p = 0.50$ the shortcut is absent from training. As shortcut strength increases, standard predictors and calibration baselines degrade sharply in most cases, while CTTA largely preserves AUROC and calibration. Select-CTTA consistently provides strong performance across tasks, shortcut strengths, and fine-tuning objectives, often approaching balanced training. }
\label{tab:app-strength}
\small\setlength{\tabcolsep}{3pt}
\resizebox{\textwidth}{!}{\begin{tabular}{lcccccccccccc}
\toprule
 & \multicolumn{6}{c}{Math (Sycophancy)} & \multicolumn{6}{c}{Occupation (Gender)} \\
\cmidrule(lr){2-7}\cmidrule(lr){8-13}
 & \multicolumn{3}{c}{AUROC $\uparrow$} & \multicolumn{3}{c}{ECE $\downarrow$} & \multicolumn{3}{c}{AUROC $\uparrow$} & \multicolumn{3}{c}{ECE $\downarrow$} \\
\cmidrule(lr){2-4}\cmidrule(lr){5-7}\cmidrule(lr){8-10}\cmidrule(lr){11-13}
Method & $p = 0.50$ & $p = 0.75$ & $p = 0.95$ & $p = 0.50$ & $p = 0.75$ & $p = 0.95$ & $p = 0.50$ & $p = 0.75$ & $p = 0.95$ & $p = 0.50$ & $p = 0.75$ & $p = 0.95$ \\
\midrule
\multicolumn{13}{@{}l}{\textit{All-tokens loss}} \\
Standard (No Aug) & 0.706 & 0.380 & 0.062 & 0.132 & 0.438 & 0.894 & \underline{0.949} & 0.878 & 0.271 & 0.069 & 0.102 & 0.572 \\
BC & 0.706 & 0.380 & 0.062 & \underline{0.067} & 0.441 & 0.895 & \underline{0.949} & 0.878 & 0.271 & \underline{0.057} & \textbf{0.062} & 0.581 \\
BC (shortcut-cond.) & 0.711 & 0.636 & 0.436 & \textbf{0.066} & \textbf{0.064} & 0.201 & \underline{0.949} & 0.857 & 0.550 & \textbf{0.055} & 0.074 & 0.262 \\
CC & 0.706 & 0.380 & 0.062 & 0.162 & 0.437 & 0.860 & \underline{0.949} & 0.878 & 0.271 & 0.077 & 0.076 & 0.577 \\
CC (shortcut-cond.) & 0.687 & 0.532 & 0.111 & 0.164 & 0.231 & 0.543 & 0.948 & \textbf{0.959} & \textbf{0.933} & 0.126 & 0.151 & 0.160 \\
\cmidrule(lr){1-13}
Uniform CTTA & \textbf{0.724} & 0.721 & 0.700 & 0.133 & 0.133 & 0.153 & \underline{0.950} & \underline{0.955} & 0.913 & 0.073 & 0.101 & 0.176 \\
Neutral CTTA (removal) & 0.681 & 0.659 & 0.659 & 0.090 & 0.240 & 0.312 & 0.946 & 0.946 & 0.919 & 0.079 & 0.091 & 0.079 \\
Neutral CTTA (unsure) & 0.708 & 0.707 & 0.700 & 0.135 & 0.280 & 0.447 & -- & -- & -- & -- & -- & -- \\
Model-Weighted CTTA & \textbf{0.724} & \textbf{0.728} & \textbf{0.745} & 0.135 & 0.080 & \textbf{0.141} & \textbf{0.951} & \underline{0.958} & \underline{0.930} & 0.068 & \underline{0.079} & \textbf{0.051} \\
Select CTTA & \textbf{0.724} & \underline{0.727} & 0.725 & 0.133 & 0.120 & 0.152 & \underline{0.950} & \underline{0.957} & \underline{0.930} & 0.073 & 0.086 & \textbf{0.051} \\
\midrule
\textit{Balanced Training (Reference)} & \textit{0.703} & \textit{0.694} & \textit{0.688} & \textit{0.139} & \textit{0.134} & \textit{0.132} & \textit{0.949} & \textit{0.942} & \textit{0.938} & \textit{0.069} & \textit{0.065} & \textit{0.067} \\
\midrule
\multicolumn{13}{@{}l}{\textit{Completion-only loss}} \\
Standard (No Aug) & \textbf{0.783} & 0.585 & 0.077 & 0.048 & 0.237 & 0.893 & \underline{0.950} & 0.837 & 0.378 & 0.076 & 0.082 & 0.506 \\
BC & \textbf{0.783} & 0.585 & 0.077 & \textbf{0.033} & 0.230 & 0.893 & \underline{0.950} & 0.837 & 0.378 & \underline{0.062} & 0.080 & 0.506 \\
BC (shortcut-cond.) & \textbf{0.783} & 0.710 & 0.430 & \underline{0.034} & \textbf{0.092} & 0.201 & \underline{0.950} & 0.798 & 0.613 & \textbf{0.061} & 0.095 & 0.325 \\
CC & \textbf{0.783} & 0.585 & 0.077 & 0.102 & 0.235 & 0.820 & \underline{0.950} & 0.837 & 0.378 & 0.076 & 0.121 & 0.485 \\
CC (shortcut-cond.) & 0.780 & 0.780 & 0.430 & 0.159 & 0.139 & 0.213 & 0.950 & \textbf{0.893} & 0.890 & 0.103 & 0.113 & 0.204 \\
\cmidrule(lr){1-13}
Uniform CTTA & \textbf{0.783} & \textbf{0.796} & \textbf{0.676} & 0.047 & 0.112 & \textbf{0.155} & \textbf{0.951} & 0.886 & \textbf{0.945} & 0.072 & \underline{0.061} & 0.151 \\
Neutral CTTA (removal) & 0.781 & 0.772 & 0.662 & 0.046 & 0.184 & 0.369 & \underline{0.950} & 0.885 & \underline{0.942} & 0.078 & \textbf{0.060} & \textbf{0.117} \\
Neutral CTTA (unsure) & \underline{0.782} & 0.788 & \underline{0.673} & 0.047 & 0.226 & 0.451 & -- & -- & -- & -- & -- & -- \\
Model-Weighted CTTA & \underline{0.782} & 0.783 & 0.654 & 0.048 & 0.124 & 0.315 & \textbf{0.951} & 0.877 & 0.814 & 0.073 & 0.069 & 0.229 \\
Select CTTA & \textbf{0.783} & \textbf{0.796} & \textbf{0.676} & 0.047 & 0.112 & \textbf{0.155} & \textbf{0.951} & 0.886 & \textbf{0.945} & 0.072 & \underline{0.061} & 0.151 \\
\midrule
\textit{Balanced Training (Reference)} & \textit{0.788} & \textit{0.790} & \textit{0.791} & \textit{0.064} & \textit{0.066} & \textit{0.066} & \textit{0.950} & \textit{0.947} & \textit{0.945} & \textit{0.076} & \textit{0.076} & \textit{0.078} \\
\bottomrule
\end{tabular}
}
\end{table}

\subsection{Accuracy and Cross-Entropy}\label{app:metrics}

Tables~\ref{tab:app-metrics-ft} and~\ref{tab:app-metrics-zs} report accuracy at threshold $0.5$ and cross-entropy on the reversed split for the $p = 0.95$ setting for fine-tuned and zero-shot models.

\begin{table}[ht]
\centering
\caption{Accuracy and cross-entropy (CE) of the fine-tuned models on the reversed split ($p = 0.95$), mean over the six models. Bold: best in the column. Underlined: not significantly different from the best (95\% paired bootstrap interval of the difference includes zero). In most cases, CTTA substantially improves accuracy and cross-entropy over the unaugmented model and standard calibration baselines. Model-Weighted CTTA is strongest under all-tokens loss but degrades substantially under completion-only loss. Select-CTTA obtains some of the benefits of Model-Weighted CTTA by selecting it in some cases in the all-tokens setting, and it avoids its failures by consistently choosing Uniform CTTA in the completion-only setting.}
\label{tab:app-metrics-ft}
\small\setlength{\tabcolsep}{4pt}
\resizebox{\textwidth}{!}{\begin{tabular}{lcccccccc}
\toprule
 & \multicolumn{4}{c}{All-tokens loss} & \multicolumn{4}{c}{Completion-only loss} \\
\cmidrule(lr){2-5}\cmidrule(lr){6-9}
 & \multicolumn{2}{c}{Math (Sycophancy)} & \multicolumn{2}{c}{Occupation (Gender)} & \multicolumn{2}{c}{Math (Sycophancy)} & \multicolumn{2}{c}{Occupation (Gender)} \\
\cmidrule(lr){2-3}\cmidrule(lr){4-5}\cmidrule(lr){6-7}\cmidrule(lr){8-9}
Method & Acc $\uparrow$ & CE $\downarrow$ & Acc $\uparrow$ & CE $\downarrow$ & Acc $\uparrow$ & CE $\downarrow$ & Acc $\uparrow$ & CE $\downarrow$ \\
\midrule
Standard (No Aug) & 0.059 & 2.937 & 0.292 & 1.830 & 0.064 & 3.014 & 0.386 & 2.180 \\
BC & 0.058 & 2.934 & 0.283 & 1.792 & 0.066 & 3.012 & 0.382 & 2.176 \\
BC (shortcut-cond.) & 0.496 & 0.715 & 0.513 & 0.917 & 0.462 & 0.734 & 0.557 & 1.564 \\
CC & 0.078 & 2.990 & 0.286 & 1.859 & 0.076 & 3.125 & 0.431 & 2.412 \\
CC (shortcut-cond.) & 0.224 & 1.198 & 0.776 & 0.450 & \textbf{0.549} & 0.770 & 0.740 & 0.770 \\
\cmidrule(lr){1-9}
Uniform CTTA & 0.519 & 0.718 & 0.783 & 0.503 & 0.521 & \textbf{0.720} & \textbf{0.868} & \textbf{0.435} \\
Neutral CTTA (removal) & 0.525 & 1.060 & 0.822 & 0.390 & 0.530 & 1.248 & 0.849 & \underline{0.480} \\
Neutral CTTA (unsure) & 0.513 & 1.570 & -- & -- & 0.513 & 1.633 & -- & -- \\
Model-Weighted CTTA & \textbf{0.619} & \textbf{0.652} & \textbf{0.848} & \textbf{0.337} & 0.522 & 1.191 & 0.680 & 0.999 \\
Select CTTA & 0.597 & 0.667 & \textbf{0.848} & \textbf{0.337} & 0.521 & \textbf{0.720} & \textbf{0.868} & \textbf{0.435} \\
\midrule
\textit{Balanced Training (Reference)} & \textit{0.571} & \textit{0.670} & \textit{0.855} & \textit{0.343} & \textit{0.715} & \textit{0.552} & \textit{0.863} & \textit{0.365} \\
\bottomrule
\end{tabular}
}
\end{table}

\begin{table}[ht]
\centering
\caption{Accuracy and cross-entropy (CE) of the zero-shot models on the reversed split. Bold: best in the column. Underlined: not significantly different from the best (95\% paired bootstrap interval of the difference includes zero). Select-CTTA consistently chooses Uniform CTTA and obtains top performance across all metrics and tasks.}
\label{tab:app-metrics-zs}
\small\setlength{\tabcolsep}{4pt}
\begin{tabular}{lcccc}
\toprule
 & \multicolumn{2}{c}{Math (Sycophancy)} & \multicolumn{2}{c}{Occupation (Gender)} \\
\cmidrule(lr){2-3}\cmidrule(lr){4-5}
Method & Acc $\uparrow$ & CE $\downarrow$ & Acc $\uparrow$ & CE $\downarrow$ \\
\midrule
Standard (No Aug) & 0.278 & 5.353 & 0.669 & 2.073 \\
BC & 0.159 & 5.037 & \textbf{0.701} & \underline{1.780} \\
BC (shortcut-cond.) & 0.468 & 1.689 & 0.615 & 2.863 \\
CC & 0.254 & 5.275 & 0.672 & 2.113 \\
CC (shortcut-cond.) & 0.334 & 3.761 & 0.653 & 2.195 \\
OOC & 0.532 & 1.463 & 0.684 & \textbf{1.711} \\
\cmidrule(lr){1-5}
Uniform CTTA & \textbf{0.565} & \textbf{0.825} & \underline{0.692} & \underline{1.734} \\
Neutral CTTA (removal) & 0.558 & 2.939 & \underline{0.687} & 1.809 \\
Neutral CTTA (unsure) & \underline{0.564} & 2.118 & -- & -- \\
Model-Weighted CTTA & 0.558 & 1.853 & \underline{0.695} & 1.840 \\
Select CTTA & \textbf{0.565} & \textbf{0.825} & \underline{0.692} & \underline{1.734} \\
\bottomrule
\end{tabular}

\end{table}

\subsection{Discussion of Balanced Training Comparison}\label{app:bal-train-compare}

In Table~\ref{tab:ft-results}, Model-Weighted CTTA sometimes outperforms the shortcut-balanced training reference in the all-tokens setting. We hypothesize that this is because Model-Weighted CTTA combines two distinct sources of label information, whereas the balanced reference relies on only one. Both methods use the classification predictions of a trained LLM. These predictions are more reliable for the balanced reference, which is trained on shortcut-balanced data ($p=0.50$), than for Model-Weighted CTTA, which uses a model trained with a strong shortcut ($p=0.95$). However, Model-Weighted CTTA also incorporates label information through its likelihood-based weights. Because the shortcut is strongly associated with the label during training, the model assigns higher likelihood to counterfactual edits whose shortcut values agree with the label suggested by the input's content. The resulting weights therefore provide an indirect prediction of the label. Although each of the two signals used by Model-Weighted CTTA may be less accurate than the balanced reference's classification predictions, they may make different errors. Combining these complementary signals can therefore allow Model-Weighted CTTA to outperform the balanced reference.

\end{document}